\newcommand{\normdist}[2]{\mathcal{N}(#1, #2)}
\newtheorem{theorem}{Theorem}
\begin{document}
\onecolumn
\aistatstitle{Appendix}
\tableofcontents
\vfill
\section{MISSING PROOFS}

For $f:\mathbb{R} \mapsto \mathbb{R}$ a real function and $p$ the pdf of an univariate real r.v. let us denote,
\begin{align*}
    (f,p) := \int_{-\infty}^{\infty} f(x)p(x) dx
\end{align*}
and let $B_{1}(\mathbb{R}) :=\Big\{(f:\mathbb{R}\to \mathbb{R}) : \sup_{x\in \mathbb{R}}|f(x)|\leq 1\Big\}$ be the set of real functions bounded by $1$.

Let us also denote the indicator function as
\begin{align*}
    I_{A}(x)=
    \begin{cases}
      1, & \text{if } x\in A, \\
     0, & \text{if } x\not\in A,
   \end{cases}
\end{align*}
for $A\subset\mathbb{R}$ and $x\in \mathbb{R}$.

\begin{theorem}\label{theorem2}
If $\sup_{f\in B_{1}(\mathbb{R})}|(f,p)-(f,\tilde{p})|\leq \epsilon$ then,
\begin{align*}
    \dfrac{1}{K+1}-\epsilon \leq \mathbb{Q}_{K}(n)\leq \dfrac{1}{K+1}+\epsilon, \;\forall n\in \{0, \ldots, K\}.
\end{align*}
\end{theorem}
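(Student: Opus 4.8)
The plan is to identify $\mathbb{Q}_{K}(n)$ with $(f_n,p)$ for an explicit function $f_n$ that lies in $B_{1}(\mathbb{R})$, to identify $\tfrac{1}{K+1}$ with $(f_n,\tilde p)$, and then let the hypothesis do the rest. Concretely, take $Y\sim p$ and $X_1,\dots,X_K$ i.i.d.\ from $\tilde p$, all independent, so that by the definition of the rank statistic
\begin{align*}
\mathbb{Q}_{K}(n)=\mathbb{P}\Bigl(\textstyle\sum_{i=1}^{K} I_{(-\infty,Y)}(X_i)=n\Bigr).
\end{align*}
Conditioning on $Y=y$, the sum is Binomial$(K,\tilde F(y))$, where $\tilde F$ is the CDF of $\tilde p$, hence
\begin{align*}
\mathbb{Q}_{K}(n)=\int_{-\infty}^{\infty}\binom{K}{n}\tilde F(y)^{n}\bigl(1-\tilde F(y)\bigr)^{K-n}p(y)\,dy=(f_n,p),\qquad f_n(y):=\binom{K}{n}\tilde F(y)^{n}\bigl(1-\tilde F(y)\bigr)^{K-n}.
\end{align*}
(Since $p,\tilde p$ are densities, ties occur with probability $0$, so the choice of strict versus non-strict inequality is immaterial.)

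Next I would compute $(f_n,\tilde p)$: because $\tilde p$ is a density, $\tilde F$ is continuous, so the probability integral transform gives $\tilde F(Y)\sim\mathrm{Uniform}(0,1)$ when $Y\sim\tilde p$, whence by the Beta integral
\begin{align*}
(f_n,\tilde p)=\binom{K}{n}\int_{0}^{1} u^{n}(1-u)^{K-n}\,du=\binom{K}{n}\frac{n!\,(K-n)!}{(K+1)!}=\frac{1}{K+1}.
\end{align*}
The one step that is not bookkeeping is verifying $f_n\in B_{1}(\mathbb{R})$. It is clearly nonnegative, and the upper bound comes from the binomial theorem: for every $u\in[0,1]$,
\begin{align*}
1=\bigl(u+(1-u)\bigr)^{K}=\sum_{j=0}^{K}\binom{K}{j}u^{j}(1-u)^{K-j}
\end{align*}
is a sum of nonnegative terms, so in particular $\binom{K}{n}u^{n}(1-u)^{K-n}\le 1$; taking $u=\tilde F(y)$ yields $0\le f_n\le 1$.

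Combining the three displays, for every $n\in\{0,\dots,K\}$,
\begin{align*}
\Bigl|\mathbb{Q}_{K}(n)-\tfrac{1}{K+1}\Bigr|=\bigl|(f_n,p)-(f_n,\tilde p)\bigr|\le\sup_{f\in B_{1}(\mathbb{R})}\bigl|(f,p)-(f,\tilde p)\bigr|\le\epsilon,
\end{align*}
which is exactly the claimed two-sided bound. (If $\mathbb{Q}_{K}$ is defined with the roles of $p$ and $\tilde p$ exchanged, the identical argument works with $f_n$ built from the CDF of $p$, and the orientation is harmless because only $|(f_n,p)-(f_n,\tilde p)|$ enters.) I expect the only genuine obstacle to be spotting the pointwise bound $\binom{K}{n}u^{n}(1-u)^{K-n}\le 1$ — the binomial pmf read as a function of its success probability — which is what places $f_n$ inside $B_{1}(\mathbb{R})$; everything else is the change of variables and the Beta integral.
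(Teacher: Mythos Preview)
Your proof is correct and follows essentially the same route as the paper's: both express $\mathbb{Q}_K(n)=(h_n,p)$ for the binomial-probability function $h_n(y)=\binom{K}{n}\tilde F(y)^n(1-\tilde F(y))^{K-n}$, note that $(h_n,\tilde p)=\tfrac{1}{K+1}$, observe $h_n\in B_1(\mathbb{R})$, and apply the hypothesis. The only difference is that you supply the details the paper outsources---the binomial-theorem bound for $h_n\le 1$ and the Beta-integral computation of $(h_n,\tilde p)$ in place of the paper's appeal to its Theorem~1.
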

\begin{proof}

Choose $y_{0}\in \mathbb{R}$. The univariative distribution function (cdf) asociated to the pdf $p$ is

\begin{align*}
    F(y_{0}) = \int_{-\infty}^{y_{0}}p(y)dy = \big( I_{(-\infty, y_{0})}, p\big),
\end{align*}

and for $\tilde{p}$ we obtain the cdf

\begin{align*}
    \tilde{F}(y_{0}) = \int_{-\infty}^{y_{0}}\tilde{p}(y)dy =  \big( I_{(-\infty, y_{0})}, \tilde{p}\big).
\end{align*}

Now, the rank statistic can be interpreted in terms of a binomial r.v. To be specific, choose some $y_{0}\in \mathbb{R}$ and then run $K$ Bernoulli trials where, for the $i$-th trial, we draw $\tilde{y}_{i}\sim \tilde{p}$ and declare a success if, and only if,  $\tilde{y}_{i}<y_{0}$. It is apparent that the probability of success for these trials is $\tilde{F}(y_{0})$ and the probability of having exactly $n$ successes is

\begin{align*}
    h_{n}(y_{0}) = \binom{K}{n} (\tilde{F}(y_{0}))^{n}(1-\tilde{F}(y_{0}))^{K-n}.
\end{align*}

If we now let $y_{0}\sim p$ and integrate $h_{n}(y_{0})$ with respect to $p(y)$ we obtain the probability of the event $A_{K}=n$, i.e.
\begin{align*}
    \mathbb{Q}_{K}(n) = (h_{n}, p), \quad\forall n\in \{0,\ldots, K\}.
\end{align*}

Now, since $h_{n}\in B_{1}(\mathbb{R})$, the assumption in the statement of Theorem $2$ yields

\begin{align*}
    |(h_{n},\tilde{p})- (h_{n},p)|\leq \epsilon,
\end{align*}
which, in turn, implies

\begin{align*}
    (h_{n}, \tilde{p})-\epsilon \leq (h_{n}, p) \leq (h_{n}, \tilde{p})+\epsilon.
\end{align*}
However, $(h_{n}, p)=\mathbb{Q}_{K}(n)$ by construction. On the other hand, if we let $y_{0}\sim\tilde{p}$ and integrate with respect to $\tilde{p}$ Theorem $1$ yields that $(h_{n}, \tilde{p}) = \dfrac{1}{K+1}$, hence we obtain the inequality
\begin{align*}
    \dfrac{1}{K+1} -\epsilon \leq \mathbb{Q}_{K}(n) \leq \dfrac{1}{K+1} +\epsilon,
\end{align*}
and conclude the proof.
\end{proof}



\clearpage
\newpage
\section{ADDITIONAL EXPERIMENTS}

In this section of the appendix, we expand upon the experiments conducted. In the first part, we will study how the results of the ISL method for learning 1-D distributions change depending on the chosen hyperparameters. Later, in a second part, we will examine how these evolve over time. Finally, we will extend the experiments conducted with respect to time series, considering a mixture time series case. We will also provide additional results for the `electricity-c' and `electricity-f' series, and lastly, we will include the results obtained for a new time series.

\subsection{The Effect of Parameter $K_{max}$}
We will first analyze the effects of the evolution of the hyperparameter $K_{max}$ for different target distributions. For training, the number of epochs is $1000$, the learning rate $10^{-2}$, and the weights are updated using Adam optimizer. Also, the number of (ground-truth) observations is, in every case, $N=1000$. First, we consider a standard normal distribution,  $\normdist{0}{1}$, as the source from which we sample random noise to generate the synthetic data. As generator, we use a $4$-layer multilayer perceptron (MLP) with $7, 13, 7$ and $1$ units at the corresponding layers. As activation function we use a exponential linear unit (ELU). We specify the target distribution and $K_{max}$ in each subcaption.

At the sight of the results (see Figure \ref{figure 1 appendix} and \ref{figure 2 appendix}) we observe that a higher value of $K_{max}$ generally leads to more accurate results compared to the true distribution. However, this is not always the case because, as demonstrated in the example of the Pareto distribution, the value of $K$ achieved during training is lower (specifically $6$) than the maximum limit set for K. In such cases, $K_{max}$ does not have an effect.

\begin{figure}[p]
    \centering
    
    \begin{minipage}{0.22\textwidth}
        \centering
        \includegraphics[width=\linewidth]{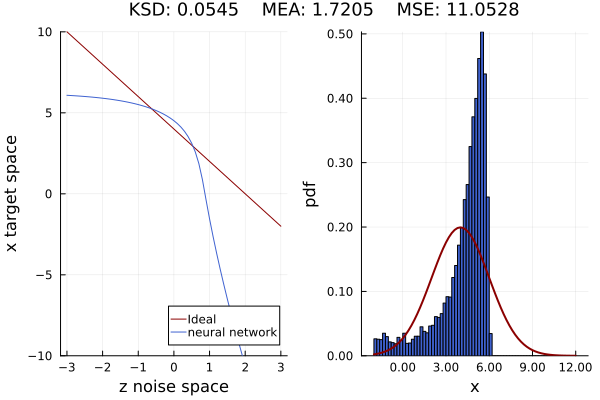}
        \caption*{\tiny $\mathcal{N}(4,2)$, $K_{max}=2$}
    \end{minipage}
    \hfill
    \begin{minipage}{0.22\textwidth}
        \centering
        \includegraphics[width=\linewidth]{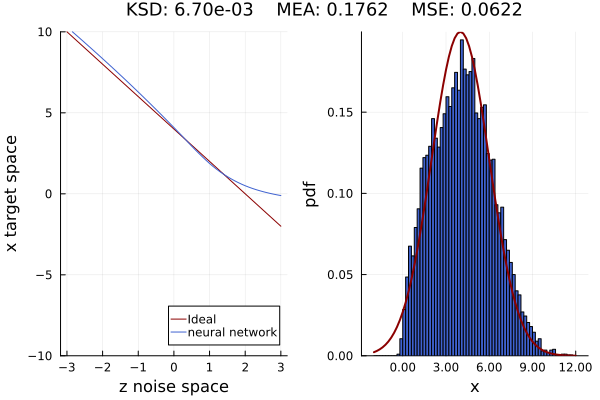}
        \caption*{\tiny $\mathcal{N}(4,2)$, $K_{max}=5$}
    \end{minipage}
    \hfill
    \begin{minipage}{0.22\textwidth}
        \centering
        \includegraphics[width=\linewidth]{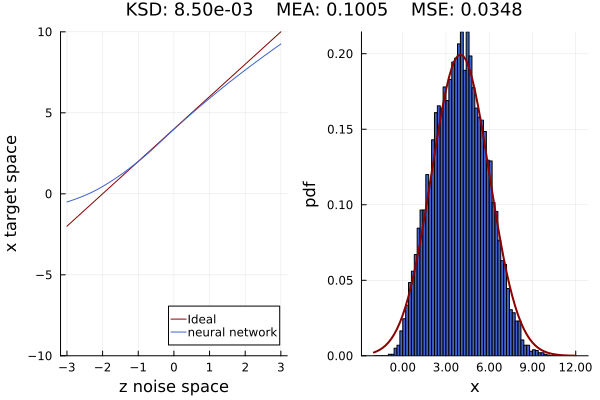}
        \caption*{\tiny $\mathcal{N}(4,2)$, $K_{max}=10$}
    \end{minipage}
    \hfill
    \begin{minipage}{0.22\textwidth}
        \centering
        \includegraphics[width=\linewidth]{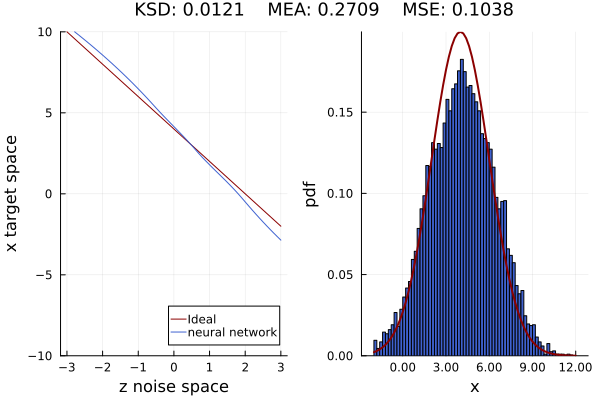}
        \caption*{\tiny $\mathcal{N}(4,2)$, $K_{max}=20$}
    \end{minipage}
    
    \vspace{2cm} 
    
    \begin{minipage}{0.22\textwidth}
        \centering
        \includegraphics[width=\linewidth]{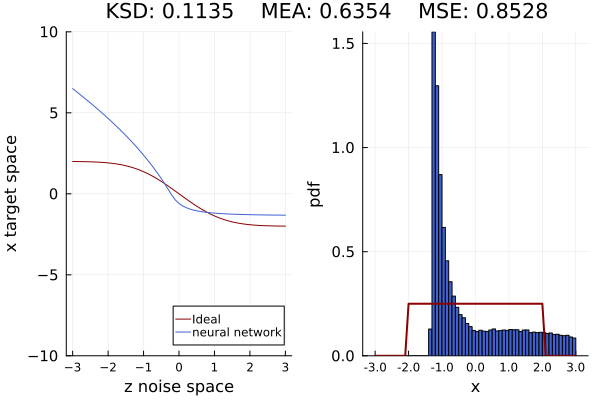}
        \caption*{\tiny $\mathcal{U}(-2,2)$, $K_{max}=2$}
    \end{minipage}
    \hfill
    \begin{minipage}{0.22\textwidth}
        \centering
        \includegraphics[width=\linewidth]{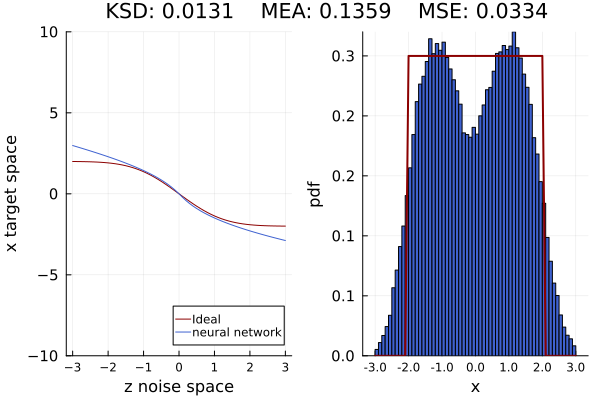}
        \caption*{\tiny $\mathcal{U}(-2,2)$, $K_{max}=5$}
    \end{minipage}
    \hfill
    \begin{minipage}{0.22\textwidth}
        \centering
        \includegraphics[width=\linewidth]{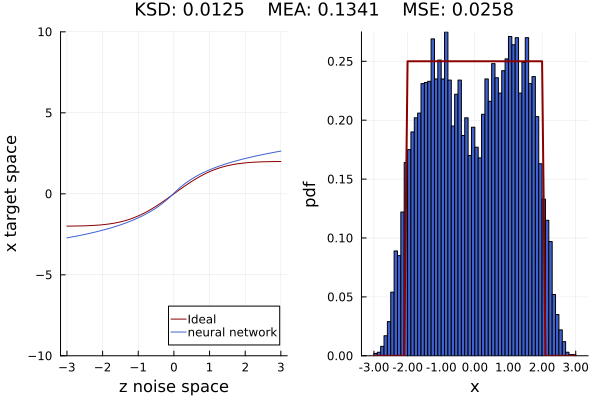}
        \caption*{\tiny $\mathcal{U}(-2,2)$, $K_{max}=10$}
    \end{minipage}
    \hfill
    \begin{minipage}{0.22\textwidth}
        \centering
        \includegraphics[width=\linewidth]{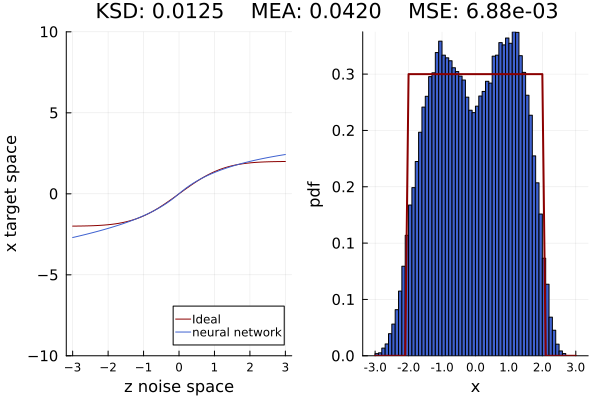}
        \caption*{\tiny $\mathcal{U}(-2,2)$, $K_{max}=20$}
    \end{minipage}
    
    \vspace{2cm} 
    
    \begin{minipage}{0.22\textwidth}
        \centering
        \includegraphics[width=\linewidth]{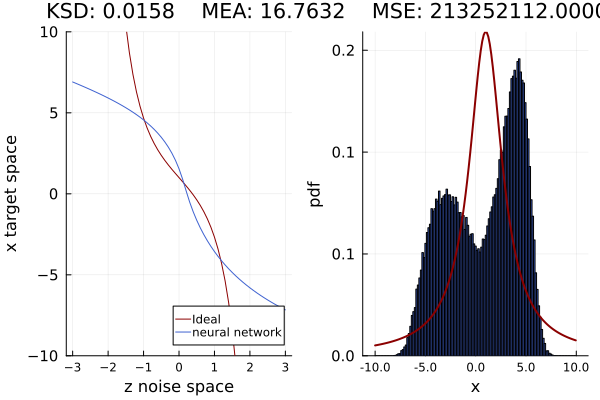}
        \caption*{\tiny Cauchy$(1,2)$, $K_{max}=2$}
    \end{minipage}
    \hfill
    \begin{minipage}{0.22\textwidth}
        \centering
        \includegraphics[width=\linewidth]{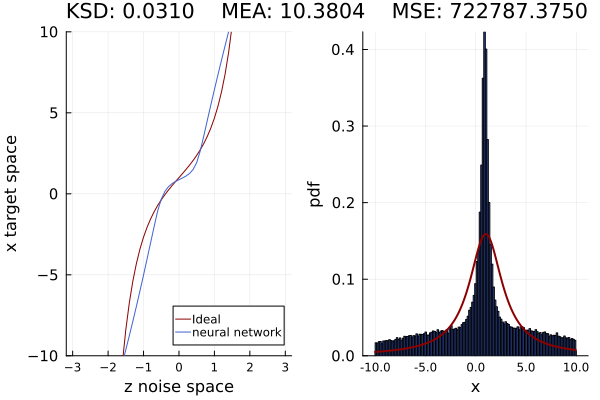}
        \caption*{\tiny Cauchy$(1,2)$, $K_{max}=5$}
    \end{minipage}
    \hfill
    \begin{minipage}{0.22\textwidth}
        \centering
        \includegraphics[width=\linewidth]{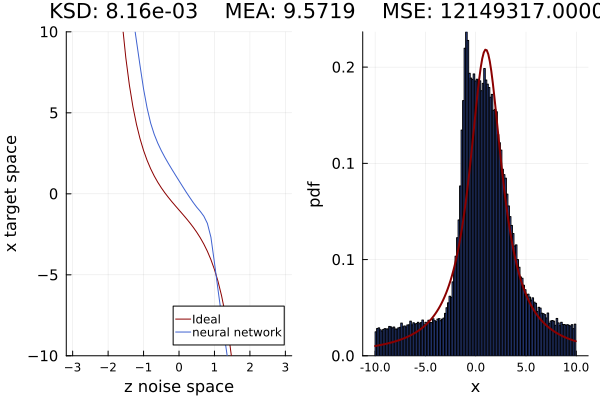}
        \caption*{\tiny Cauchy$(1,2)$, $K_{max}=10$}
    \end{minipage}
    \hfill
    \begin{minipage}{0.22\textwidth}
        \centering
        \includegraphics[width=\linewidth]{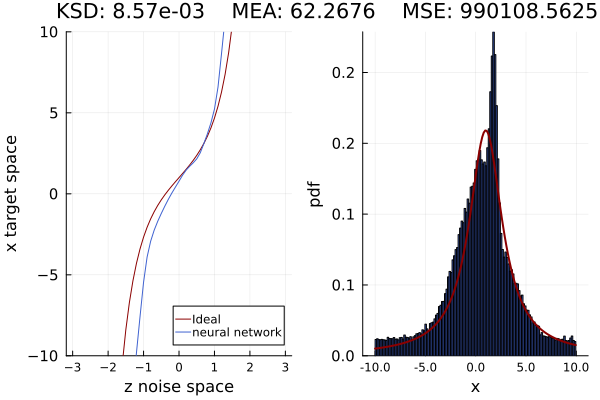}
        \caption*{\tiny Cauchy$(1,2)$, $K_{max}=20$}
    \end{minipage}
    
    \vspace{2cm} 
    
    \begin{minipage}{0.22\textwidth}
        \centering
        \includegraphics[width=\linewidth]{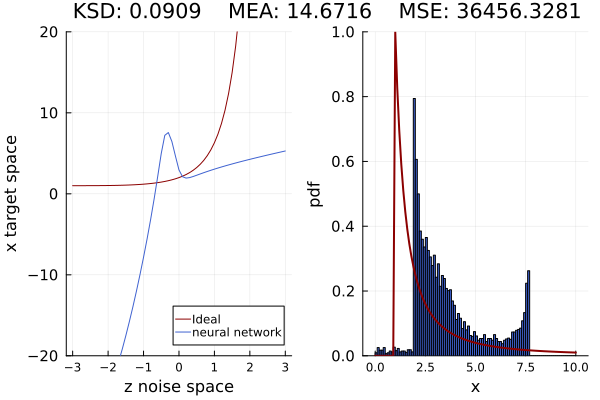}
        \caption*{\tiny Pareto$(1,1)$, $K_{max}=2$}
    \end{minipage}
    \hfill
    \begin{minipage}{0.22\textwidth}
        \centering
        \includegraphics[width=\linewidth]{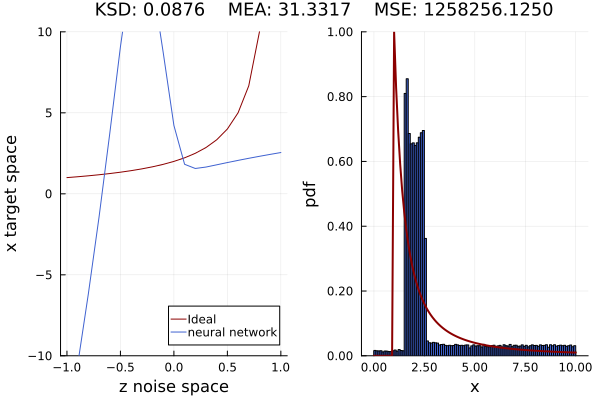}
        \caption*{\tiny Pareto$(1,1)$, $K_{max}=5$}
    \end{minipage}
    \hfill
    \begin{minipage}{0.22\textwidth}
        \centering
        \includegraphics[width=\linewidth]{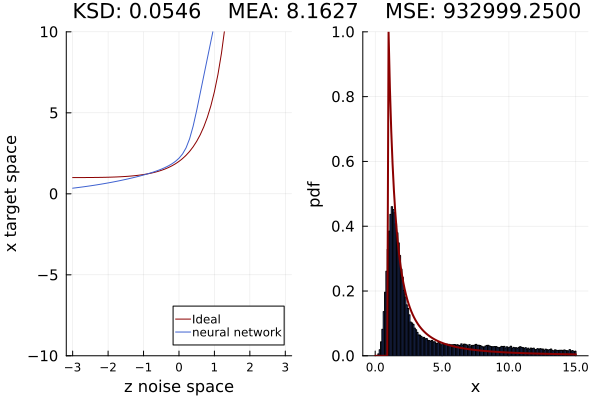}
        \caption*{\tiny Pareto$(1,1)$, $K_{max}=10$}
    \end{minipage}
    \hfill
    \begin{minipage}{0.22\textwidth}
        \centering
        \includegraphics[width=\linewidth]{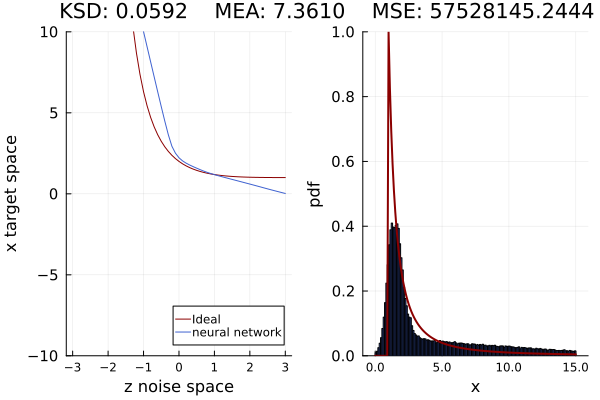}
        \caption*{\tiny Pareto$(1,1)$, $K_{max}=20$}
    \end{minipage}
    
    \caption{Results obtained for varying $K_{max}$ for different target distributions. Global parameters: $N=1000$, $Epochs=1000$, $Learning Rate=10^{-2}$, and Initial Distribution=$\mathcal{N}(0,1)$.}\label{figure 1 appendix}
\end{figure}

\begin{figure}
    \centering
    \begin{minipage}{0.45\textwidth}
        \centering
        \begin{subfigure}{\linewidth}
            \includegraphics[width=\linewidth]{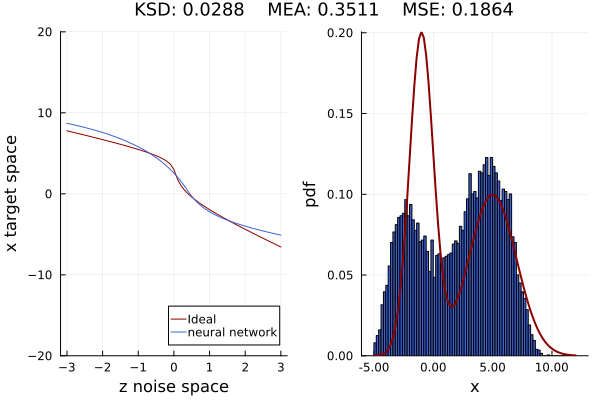}
            \caption*{\tiny $\text{Model}_{1}$, $K_{max}=2$}
        \end{subfigure}
        \par\bigskip\bigskip
        \begin{subfigure}{\linewidth}
            \includegraphics[width=\linewidth]{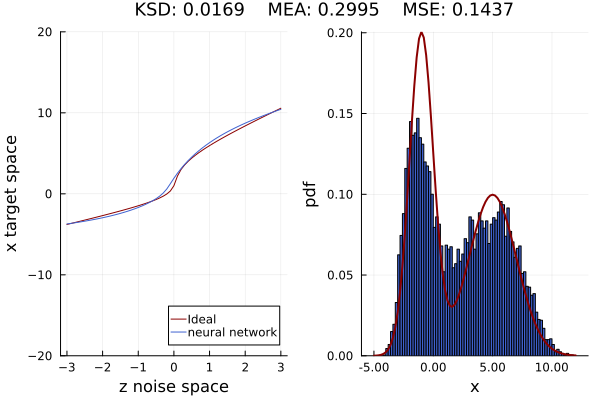}
            \caption*{\tiny $\text{Model}_{1}$, $K_{max}=10$}
        \end{subfigure}
    \end{minipage}
    \hfill
    \begin{minipage}{0.45\textwidth}
        \begin{subfigure}{\linewidth}
            \includegraphics[width=\linewidth]{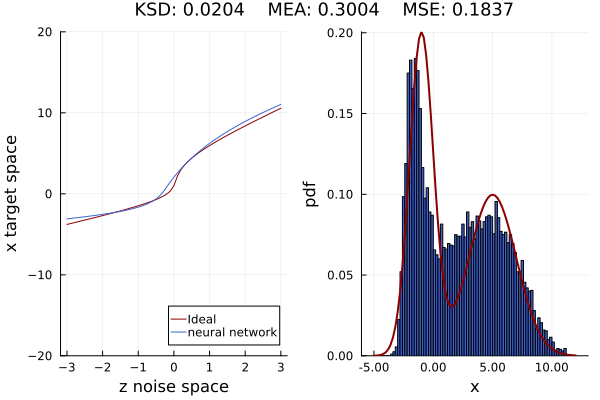}
            \caption*{\tiny $\text{Model}_{1}$, $K_{max}=5$}
        \end{subfigure}
        \par\bigskip\bigskip
        \begin{subfigure}{\linewidth}
            \includegraphics[width=\linewidth]{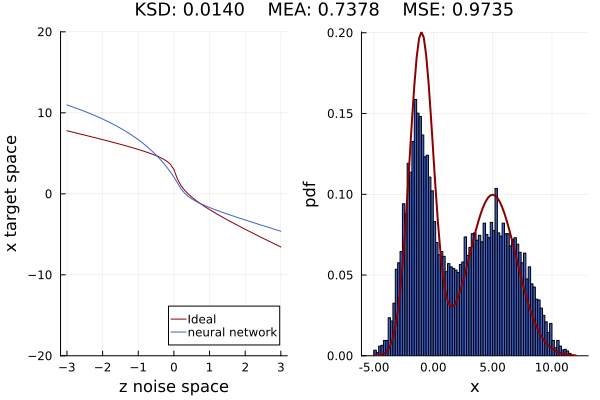}
            \caption*{\tiny $\text{Model}_{1}$, $K_{max}=20$}
        \end{subfigure}
    \end{minipage}

    \caption{Results obtained for varying $K_{max}$ for Target Distributions = $\text{Model}_{1}$.\\Global parameters: $N=1000$, $Epochs=1000$, $Learning Rate=10^{-2}$, and Initial Distribution=$\normdist{0}{1}$.} \label{figure 2 appendix}
\end{figure}

\newpage

As before, we are analyzing the effects of the evolution of the hyperparameter $K_{max}$ for different distributions. In this case, we consider the random noise originating from an uniform distribution $\mathcal{U}(-1,1)$. The other settings are identical to the $\normdist{0}{1}$ case.
\vspace{1cm}


\begin{figure}[H]
    \centering
    
    \begin{minipage}{0.22\textwidth}
        \centering
        \includegraphics[width=\linewidth]{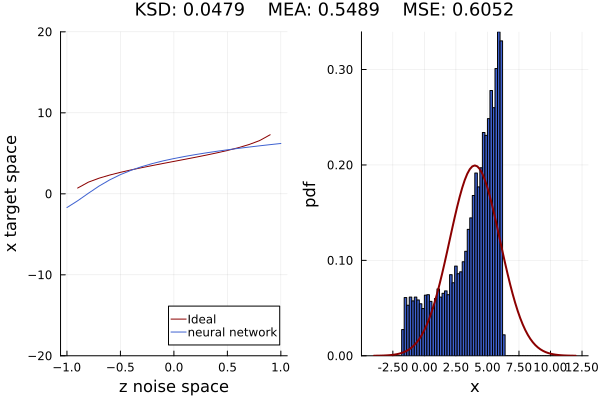}
        \caption*{\tiny $\mathcal{N}(4,2)$, $K_{max}=2$}
    \end{minipage}
    \hfill
    \begin{minipage}{0.22\textwidth}
        \centering
        \includegraphics[width=\linewidth]{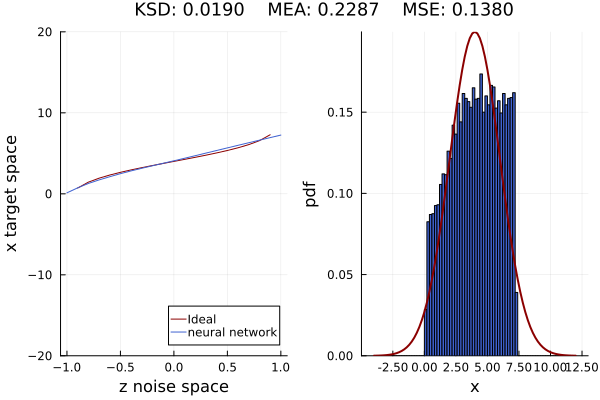}
        \caption*{\tiny $\mathcal{N}(4,2)$, $K_{max}=5$}
    \end{minipage}
    \hfill
    \begin{minipage}{0.22\textwidth}
        \centering
        \includegraphics[width=\linewidth]{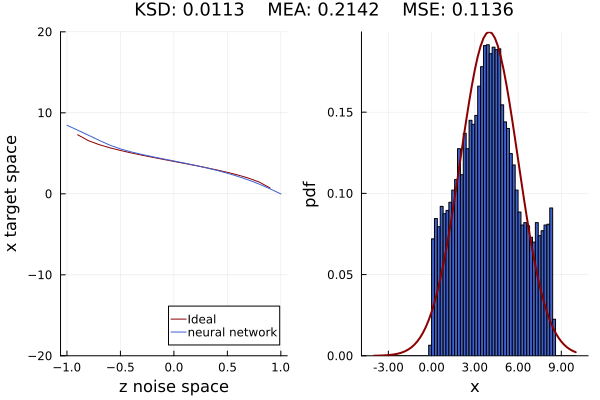}
        \caption*{\tiny $\mathcal{N}(4,2)$, $K_{max}=10$}
    \end{minipage}
    \hfill
    \begin{minipage}{0.22\textwidth}
        \centering
        \includegraphics[width=\linewidth]{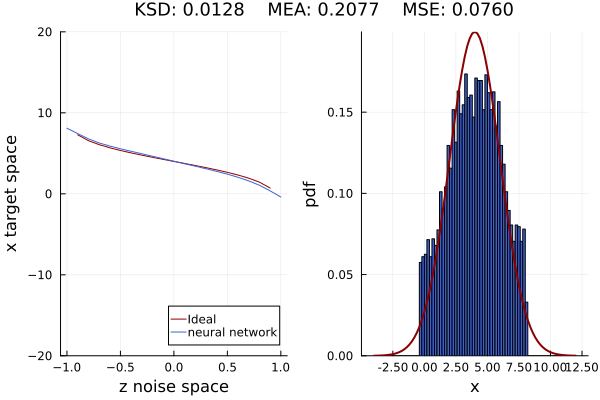}
        \caption*{\tiny $\mathcal{N}(4,2)$, $K_{max}=20$}
    \end{minipage}

    \vspace{2cm} 

    \begin{minipage}{0.22\textwidth}
        \centering
        \includegraphics[width=\linewidth]{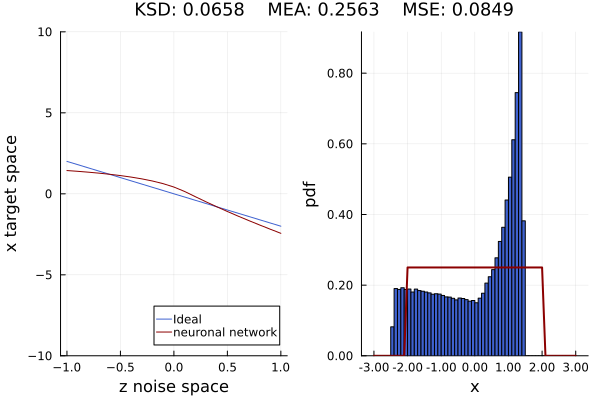}
        \caption*{\tiny $\mathcal{U}(4, 2)$, $K_{max}=2$}
    \end{minipage}
    \hfill
    \begin{minipage}{0.22\textwidth}
        \centering
        \includegraphics[width=\linewidth]{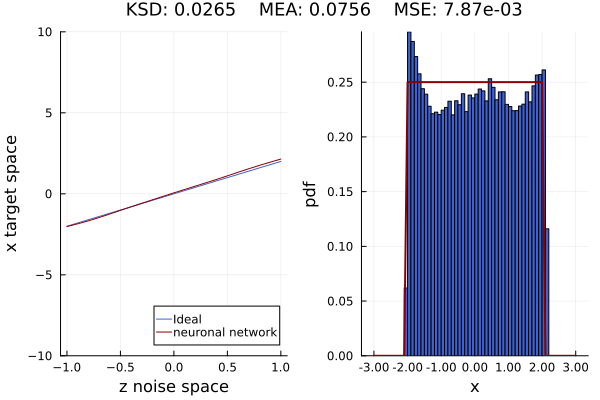}
        \caption*{\tiny $\mathcal{U}(4, 2)$, $K_{max}=5$}
    \end{minipage}
    \hfill
    \begin{minipage}{0.22\textwidth}
        \centering
        \includegraphics[width=\linewidth]{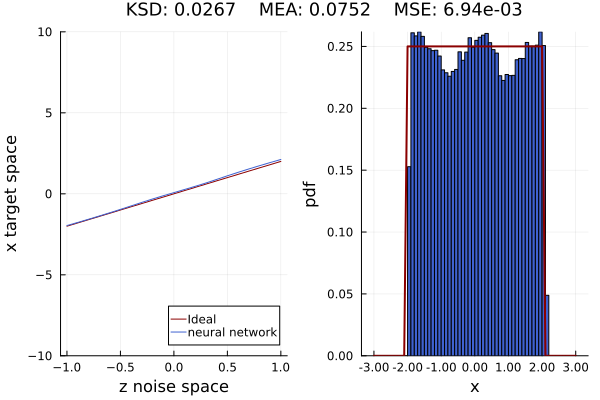}
        \caption*{\tiny $\mathcal{U}(4, 2)$, $K_{max}=10$}
    \end{minipage}
    \hfill
    \begin{minipage}{0.22\textwidth}
        \centering
        \includegraphics[width=\linewidth]{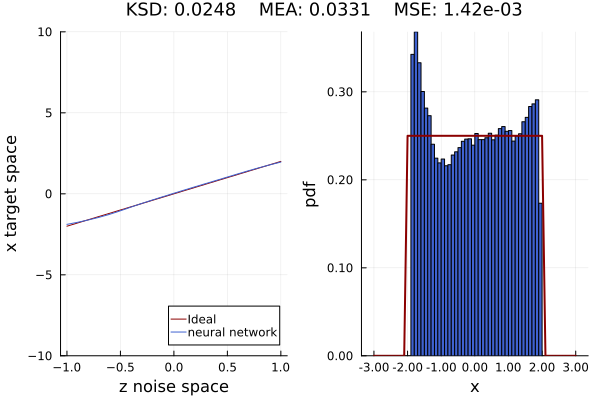}
        \caption*{\tiny $\mathcal{U}(4, 2)$, $K_{max}=20$}
    \end{minipage}

    \vspace{2cm} 

    \begin{minipage}{0.22\textwidth}
        \centering
        \includegraphics[width=\linewidth]{img/Learning1D/Unif/Cauchy/samples=1000-max_k=2-epochs=1000-lr=0.01-1.png}
        \caption*{\tiny Cauchy$(1,2)$, $K_{max}=2$}
    \end{minipage}
    \hfill
    \begin{minipage}{0.22\textwidth}
        \centering
        \includegraphics[width=\linewidth]{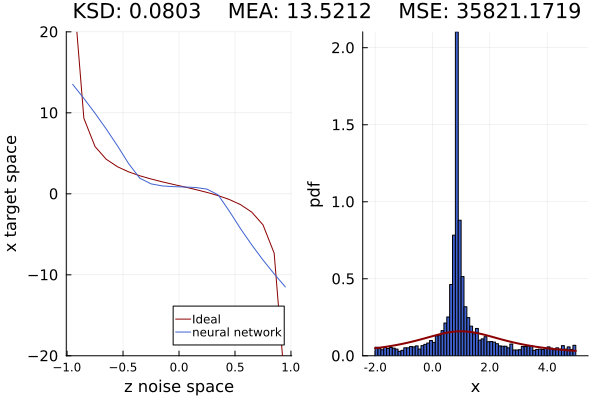}
        \caption*{\tiny Cauchy$(1,2)$, $K_{max}=5$}
    \end{minipage}
    \hfill
    \begin{minipage}{0.22\textwidth}
        \centering
        \includegraphics[width=\linewidth]{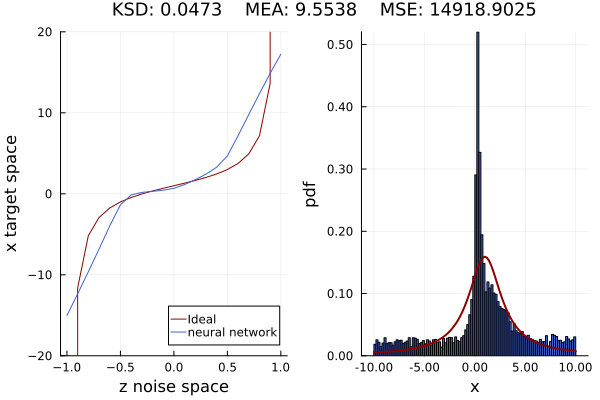}
        \caption*{\tiny Cauchy$(1,2)$, $K_{max}=10$}
    \end{minipage}
    \hfill
    \begin{minipage}{0.22\textwidth}
        \centering
        \includegraphics[width=\linewidth]{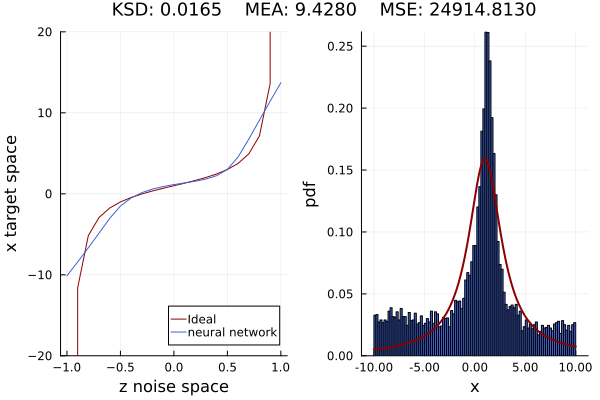}
        \caption*{\tiny Cauchy$(1,2)$, $K_{max}=20$}
    \end{minipage}

    \vspace{2cm} 

    \begin{minipage}{0.22\textwidth}
        \centering
        \includegraphics[width=\linewidth]{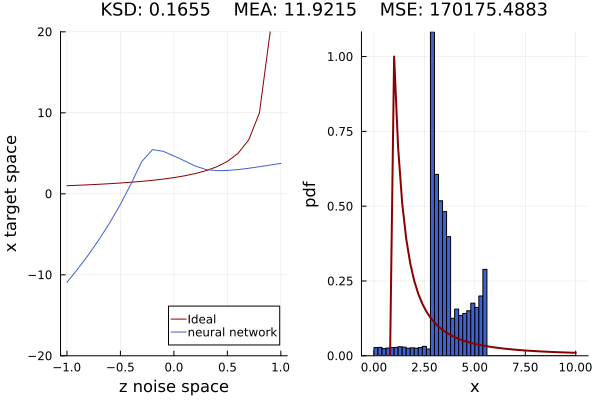}
        \caption*{\tiny Pareto$(1,1)$, $K_{max}=2$}
    \end{minipage}
    \hfill
    \begin{minipage}{0.22\textwidth}
        \centering
        \includegraphics[width=\linewidth]{img/Learning1D/Unif/Pareto/samples_1000_max_k_5_epochs_1000_lr_001_1.png}
        \caption*{\tiny Pareto$(1,1)$, $K_{max}=5$}
    \end{minipage}
    \hfill
    \begin{minipage}{0.22\textwidth}
        \centering
        \includegraphics[width=\linewidth]{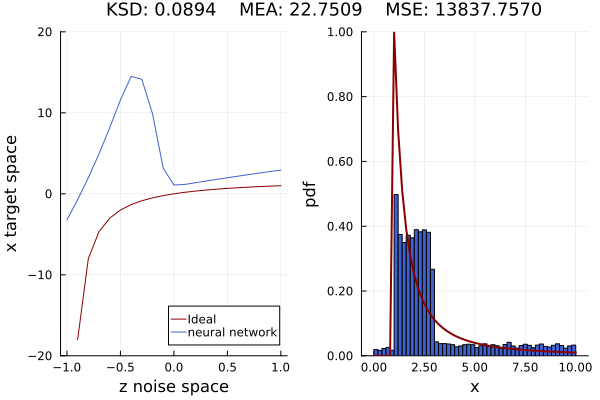}
        \caption*{\tiny Pareto$(1,1)$, $K_{max}=10$}
    \end{minipage}
    \hfill
    \begin{minipage}{0.22\textwidth}
        \centering
        \includegraphics[width=\linewidth]{img/Learning1D/Unif/Pareto/samples_1000_max_k_10_epochs_1000_lr_001_2_1.png}
        \caption*{\tiny Pareto$(1,1)$, $K_{max}=20$}
    \end{minipage}

    \caption{Results obtained for varying $K_{max}$ for different target distributions. Global parameters: $N=1000$, $Epochs=1000$, $\text{Learning Rate}=10^{-2}$, and Initial Distribution = $\mathcal{U}(-1,1)$.}
\end{figure}

\clearpage
\newpage

\begin{figure}
    \centering
    \begin{minipage}{0.45\textwidth}
        \centering
        \begin{subfigure}{\linewidth}
            \includegraphics[width=\linewidth]{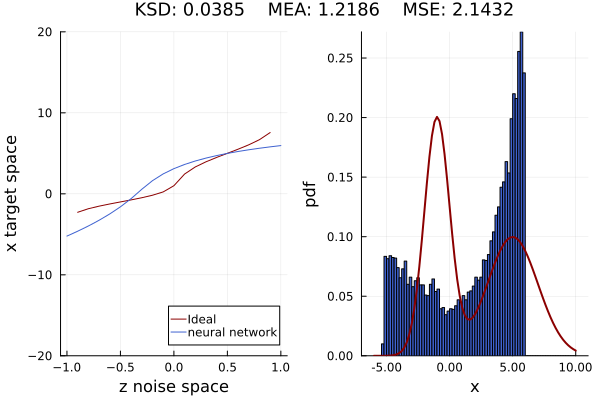}
            \caption*{\tiny $\text{Model}_{1}$, $K_{max}=2$}
        \end{subfigure}
        \par\bigskip\bigskip
        \begin{subfigure}{\linewidth}
            \includegraphics[width=\linewidth]{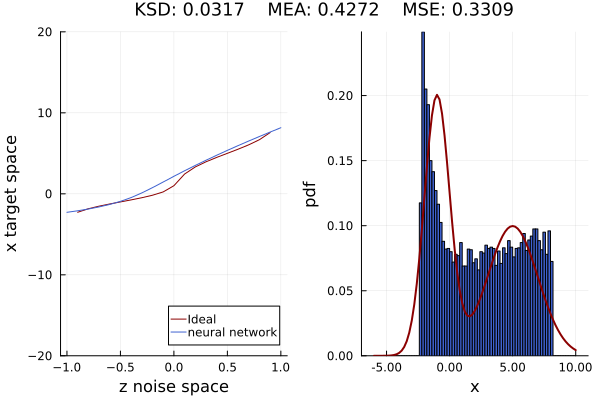}
            \caption*{\tiny $\text{Model}_{1}$, $K_{max}=5$}
        \end{subfigure}
    \end{minipage}
    \begin{minipage}{0.45\textwidth}
        \begin{subfigure}{\linewidth}
            \includegraphics[width=\linewidth]{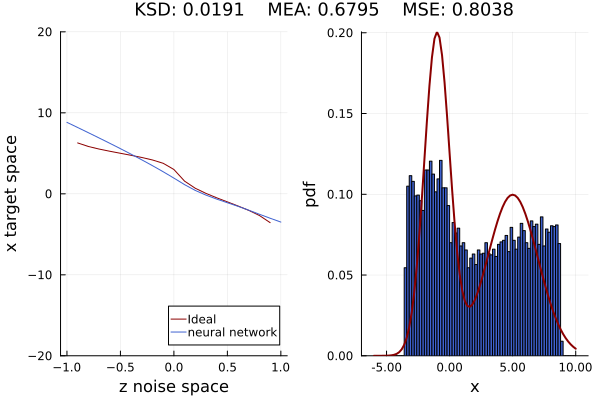}
            \caption*{\tiny $\text{Model}_{1}$, $K_{max}=20$} 
        \end{subfigure}
        \par\bigskip\bigskip
        \begin{subfigure}{\linewidth}
            \includegraphics[width=\linewidth]{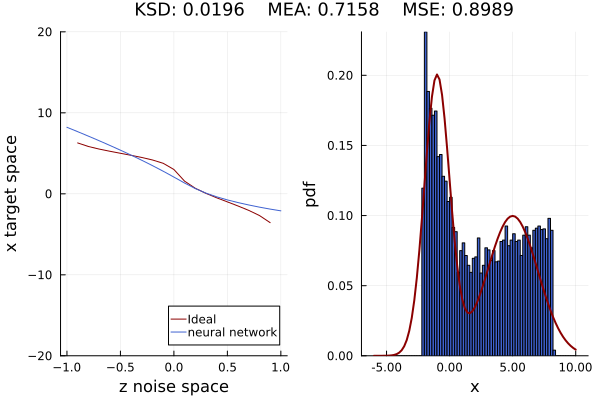}
            \caption*{\tiny $\text{Model}_{1}$, $K_{max}=10$} 
        \end{subfigure}
    \end{minipage}
    \caption{Results obtained for varying $K_{max}$ for Target Distributions = $\text{Model}_{1}$. Global parameters: $N=1000$, $Epochs=1000$, $\text{Learning Rate}=10^{-2}$, and Initial Distribution = $\mathcal{U}(0, 1)$.}
\end{figure}

\newpage
\clearpage

\subsection{Experiments with large hyperparameters $K$, $N$ and number of epochs}
In this section, we investigate the capacity of ISL to learn complex distributions in scenarios where the hyperparameters $K$ and $N$ are set to large values. In each case, we specify the values of the hyperparameters. As we can see, in these two examples with large values of $K_{max}$, $N$, and allowing for sufficient training time (number of epochs is large), the method is capable of learning complex multimodal distributions.


\begin{figure}[h]
   \begin{minipage}{0.45\textwidth}
        \centering
        \begin{subfigure}{\linewidth}
            \includegraphics[width=\linewidth]{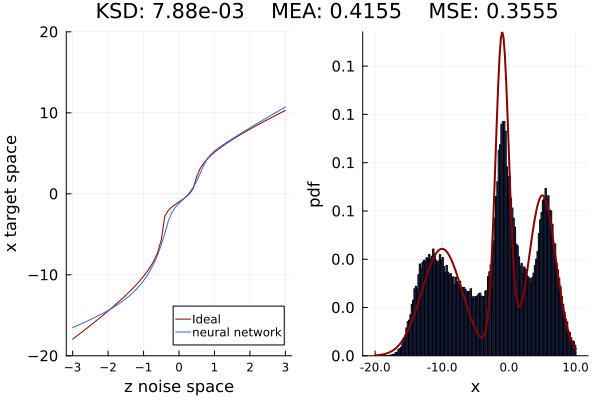}
            \caption{\small $N=10000$, $K_{max}=10$, epochs=$10000$, $\text{lr} = 10^{-2}$,\\ $\text{Target Distribution = \text{Model}}_3$.}
        \end{subfigure}
    \end{minipage}
    \hfill
    \begin{minipage}{0.45\textwidth}
    \centering
        \begin{subfigure}{\linewidth}
            \includegraphics[width=\linewidth]{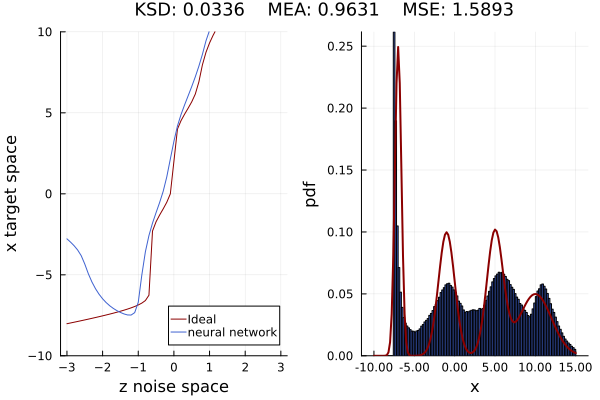}
            \caption{\small $N=5000$, $K_{max}=50$, epochs=$10000$, $\text{lr}=10^{-2}$\\
            $\text{Target Distribution =} \\ \text{MixtureModel}\big ( \normdist{-7}{0.5}, \normdist{-1}{1}, \normdist{4}{1},  \normdist{10}{2}\big)$}
        \end{subfigure}
    \end{minipage}
    \caption{Complex Distribution Learning with High Values of Hyperparameters: $K$, $N$, and Epochs.}

\end{figure}
\hfill
\subsection{Evolution of the hyperparameter $K$}

Here, we study the evolution of
hyperparameter $K$ during training. It is progressively adapted according to the algorithm explained in Section 4.2 of the paper.
In every experiment we use a learning rate of $10^{-2}$ and train for 10000 epochs. The number of ground-truth observations is $N=2,000$. In this case, we haven't imposed any restrictions on $K_{max}$, and hence $K$ can grow as required. In the subcaption, the target distribution is specified. The initial distribution is a $\normdist{0}{1}$ in all cases.

We notice that initially, as the number of epochs increases, the value of $K$ grows very rapidly, but it eventually levels off after reaching a certain point. 
The observed data (see Figure \ref{figure 6}) leads us to infer that obtaining a high $K$ value requires a progressively more effort as compared to attaining a lower value, as evidenced by the incremental increases over successive training epochs. This complexity seems to exhibit a logarithmic pattern. Therefore, there is an advantage in initially setting smaller values of $K$ and gradually increasing them.

\begin{figure}[p]
   \begin{minipage}{0.42\textwidth}
        \centering
        \begin{subfigure}{\linewidth}
            \includegraphics[width=\linewidth]{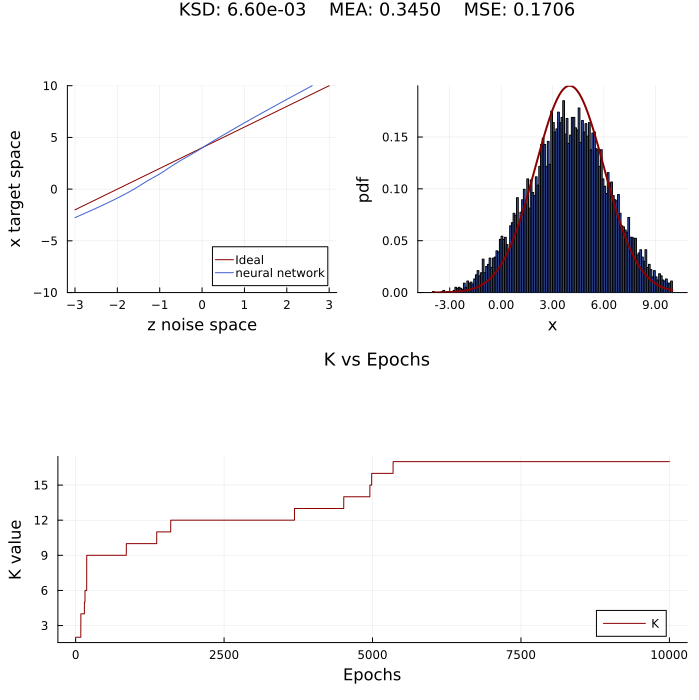}
            \caption{\small $\normdist{4}{2}$}
        \end{subfigure}
        \par\bigskip\bigskip
        \begin{subfigure}{\linewidth}
            \includegraphics[width=\linewidth]{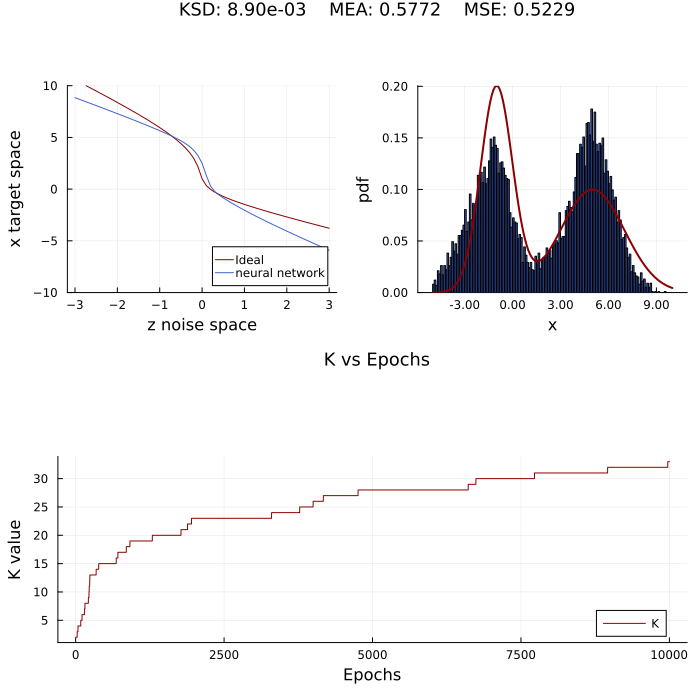}
            \caption{\small $\text{Model}_{1}$}
        \end{subfigure}
    \end{minipage}
    \hfill
    \begin{minipage}{0.42\textwidth}
        \begin{subfigure}{\linewidth}
            \includegraphics[width=\linewidth]{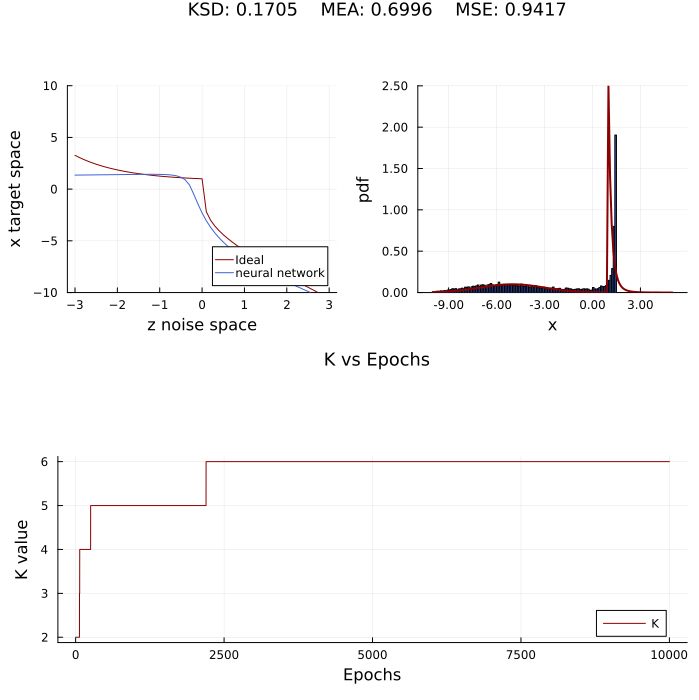}
            \caption{\small $\text{Model}_{3}$}
        \end{subfigure}
        \par\bigskip\bigskip
        \begin{subfigure}{\linewidth}
            \includegraphics[width=\linewidth]{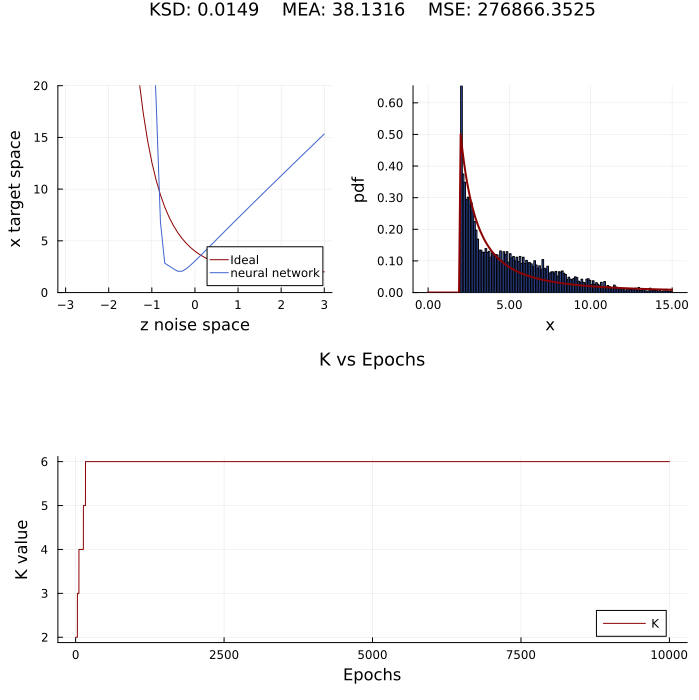}
            \caption{\small Pareto$(1,2)$, $N = 2000$}
        \end{subfigure}
    \end{minipage}
    \caption{\small Evolution of the hyperparameter $K$ as a function of the epochs. Figure top left: We compare the optimal transformation from a $\normdist{0}{1}$ to the target distribution versus the generator learned by ISL, Figure top right: The resulting distribution, Figure at the bottom: The evolution of $K$ as a function of epochs.} \label{figure 6}
\end{figure}

\clearpage
\newpage

\subsection{Mixture time series}



We now consider the following composite time series, such that the underlying density function is multimodal. For each time \( t \), a sample is taken from functions,
\[
\begin{array}{ccc}
    y_1(t) = 10 \cos(t-0.5) + \xi
    &
    \text{or,}
    &
    y_2(t) = 10 \cos(t-0.5) + \xi,
\end{array}
\]
based on a Bernoulli distribution with parameter \( 1/2 \). Noise, denoted by \( \xi \), follows a standard Gaussian distribution, $\normdist{0}{1}$. Our temporal variable, \( t \),  will take values in the interval $(-4,4)$ where each point will be equidistantly spaced by $0.1$ units apart (when we represent it, we use the conventional notation $t\geq 0$). The neural network comprises two components: an RNN with three layers featuring 16, 16 and finally 32 units; and a feedforward neural network with three layers  containing 32, 64 units and 64 units. Activation functions include ELU for the RNN layers and ELU for the MLP layers, except for the final layer, which employs an identity activation function. The chosen learning rate is $10^{-3}$ in an Adam optimizer.

In the Figure \ref{Mixture time series}, we display the predictions obtained for the neural network trained using ISL and DeepAR, with a prediction window of 20 steps.
\vspace{1cm}

\begin{figure}[H]
    \centerline{
   \begin{subfigure}{0.5\textwidth}
        \centering
        \includegraphics[width=\textwidth]{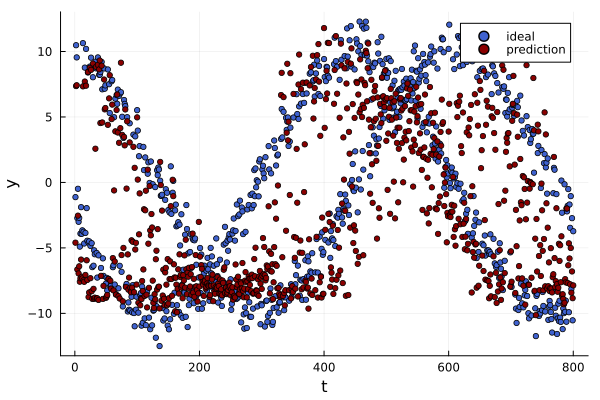}
        \caption{\tiny ISL forecasting}
        \label{fancy figure:sub_a}
    \end{subfigure}
    \hfill
    \begin{subfigure}{0.5\textwidth}
        \centering
        \includegraphics[width=\textwidth]{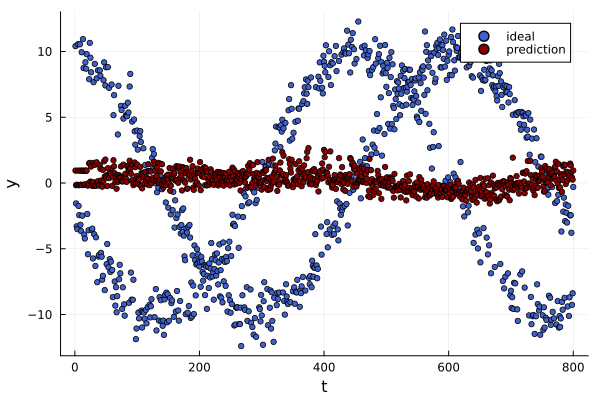}
        \caption{\tiny DeepAR forecasting}
        \label{fancy figure:sub_b}
    \end{subfigure}
    }
    \caption{Mixture time series, forecasting 20-step}
    \label{Mixture time series}
\end{figure}

\clearpage
\newpage

\subsection{More experiments with real world datasets}
In this section, we delve deeper into the experiments carried out with real world datasets. Previously, we provided a brief overview of the results pertaining to `electric-f' and `electric-c.' Within this section, we will present comparative graphs for these two datasets and introduce the results related to the wind dataset.

\subsubsection{Electricity-f time series}
We present comparative graphs of the results achieved by ISL using the `electricity-f' dataset. The training phase utilized data from 2012 to 2013, while validation encompassed a span of 20 months, commencing on April 15, 2014. For this purpose, we employed a model architecture consisting of 2 layers-deep RNN with 3 units each, as well as a 2-layer MLP with 10 units each and a ReLU activation function.

In the case of training with DeepAR, we utilized the GluonTS library \cite{alexandrov2020gluonts} with its default settings, which entail $2$ recurrent layers, each comprising $40$ units. The model was trained over the course of $100$ epochs. The results presented in Figure \ref{1-day prediction of `electricity-c'} relate to the prediction of the final day within the series for different users, with the shaded area indicating the range of plus or minus one standard deviation from the predicted value.
\vspace{1cm}
\begin{figure}[H]
   \begin{minipage}{0.45\textwidth}
        \centering
        \begin{subfigure}{\linewidth}
            \includegraphics[width=\linewidth]{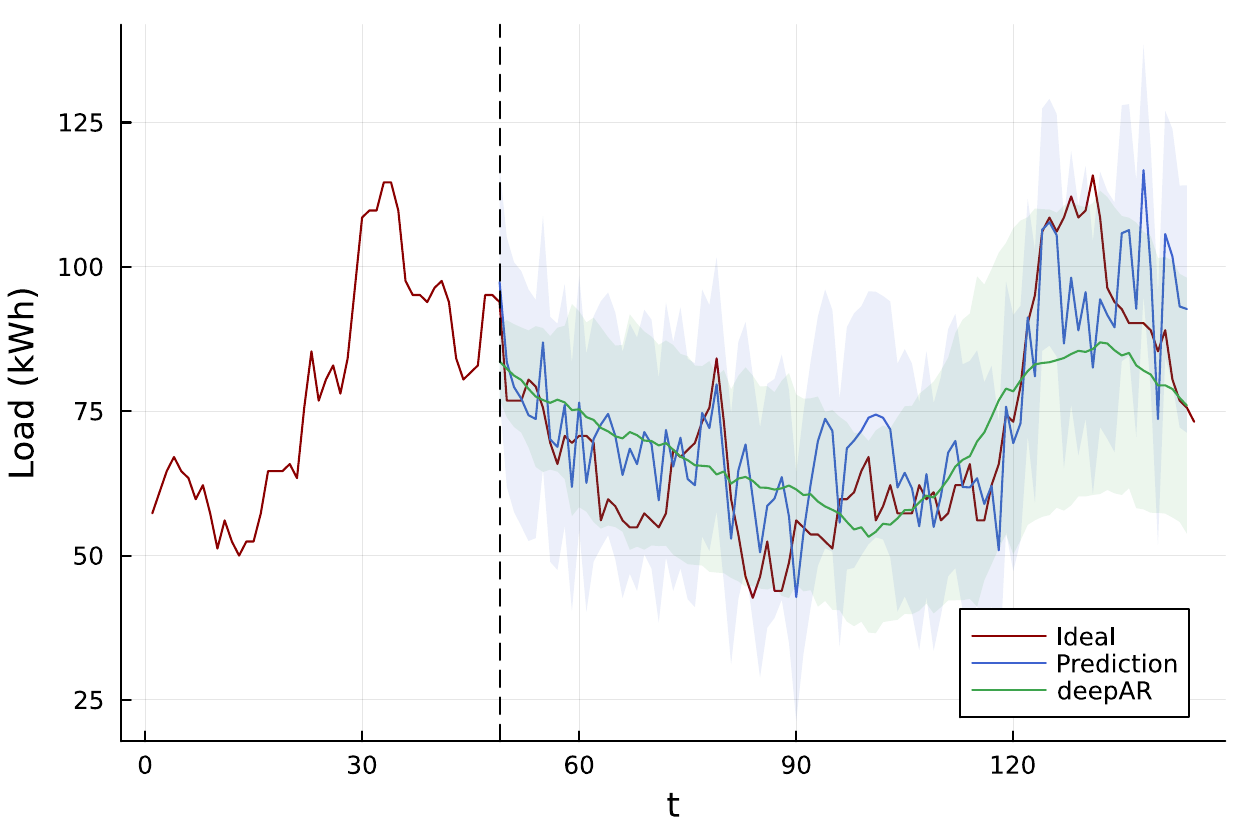}
            \caption{\small user 5}
        \end{subfigure}
        \par\bigskip\bigskip
        \begin{subfigure}{\linewidth}
            \includegraphics[width=\linewidth]{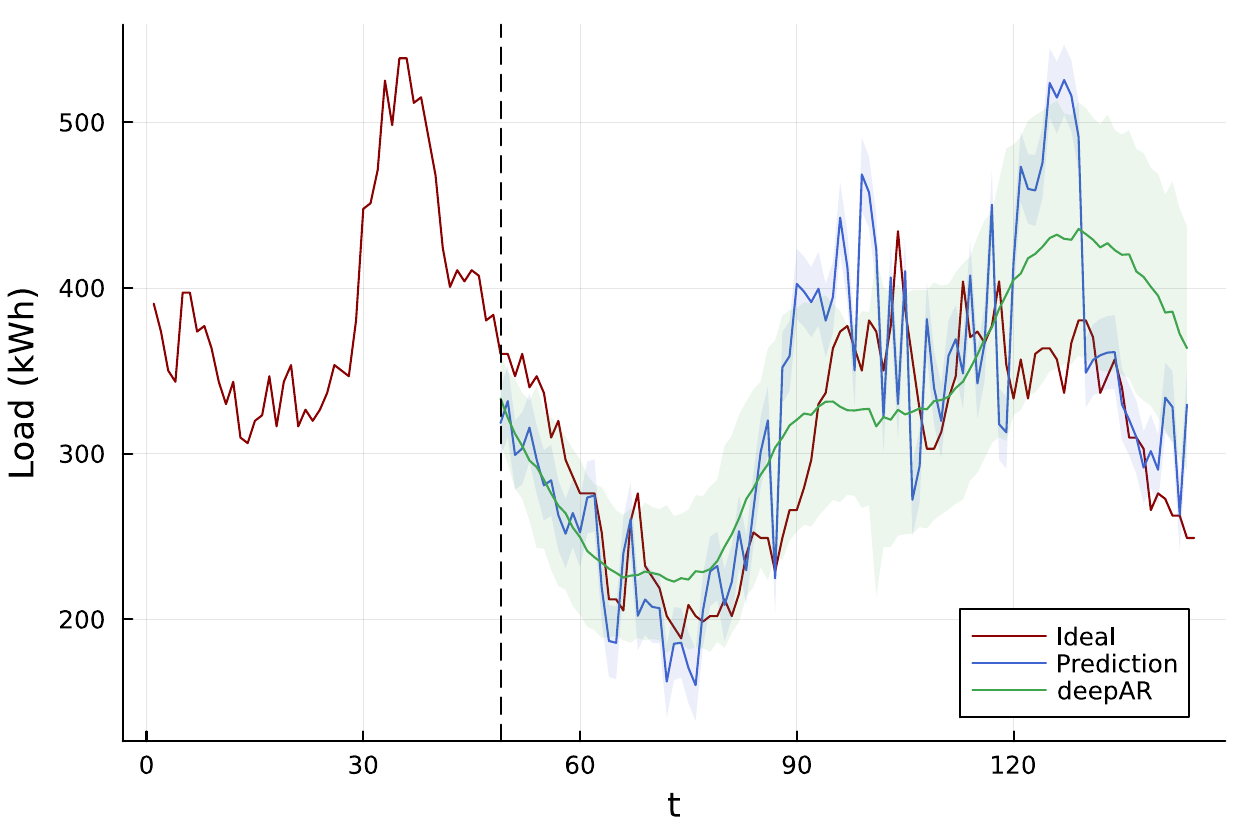}
            \caption{\small user 8}
        \end{subfigure}
    \end{minipage}
    \hfill
    \begin{minipage}{0.45\textwidth}
        \begin{subfigure}{\linewidth}
            \includegraphics[width=\linewidth]{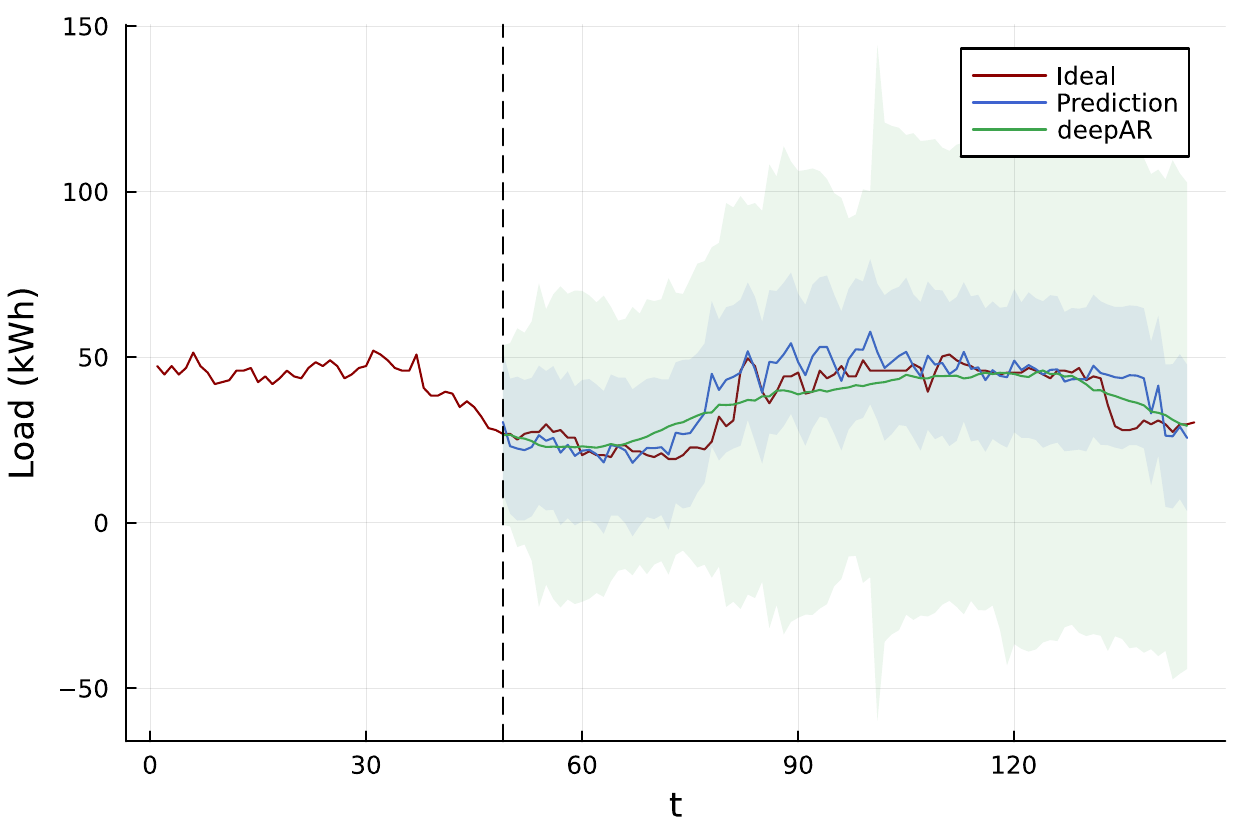}
            \caption{\small user 169}
        \end{subfigure}
        \par\bigskip\bigskip
        \begin{subfigure}{\linewidth}
            \includegraphics[width=\linewidth]{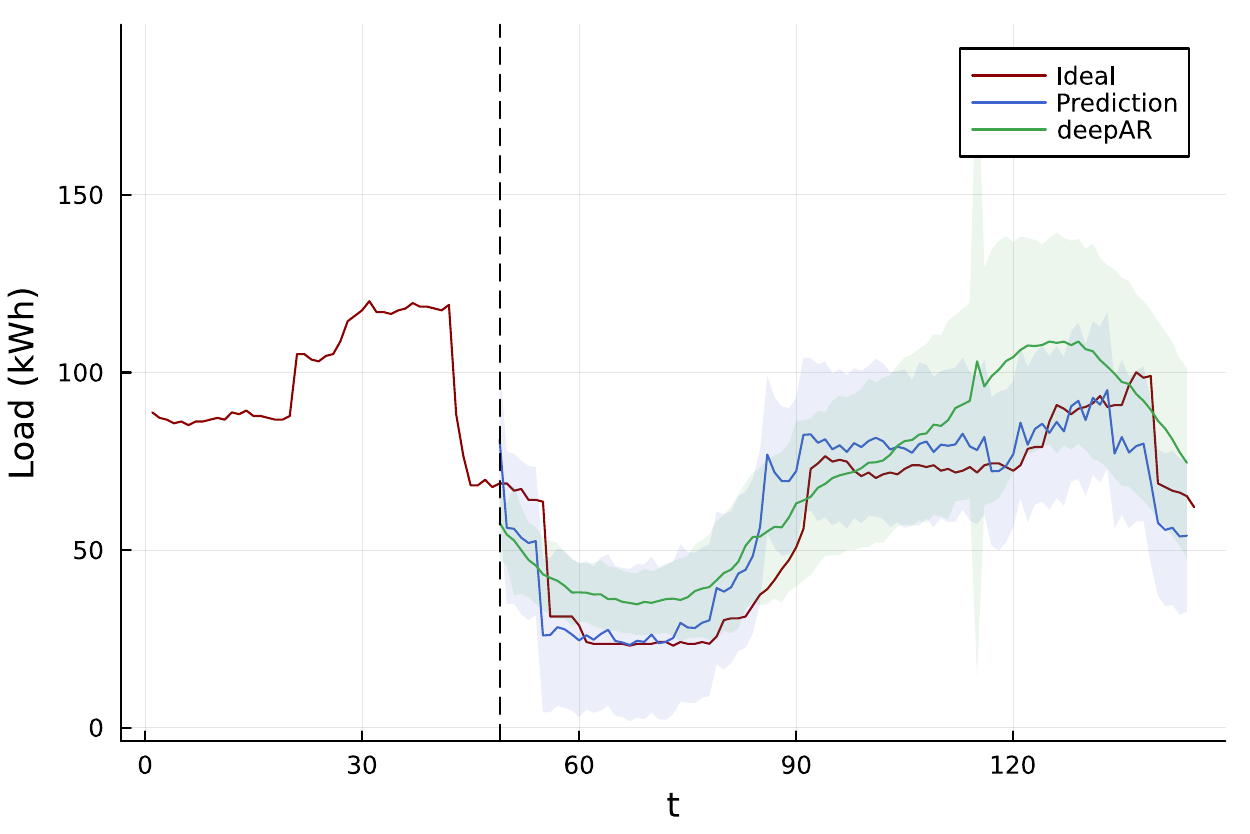}
            \caption{\small user 335}
        \end{subfigure}
    \end{minipage}
    \caption{\small 1-day prediction of `electricity-f' consumption. Blue: 1-day prediction using ISL, Green: 1-day prediction using DeepAR.} \label{1-day prediction of `electricity-c'}
\end{figure}
\hfill

\clearpage
\newpage
\subsubsection{Electricity-c time series}

We present comparative graphs of the results achieved by ISL using the `electricity-c' dataset. We used the same settings for both training and validation, as well as the same neural network as described for the `electricity-f' dataset. Below, we present various prediction plots for different users, both for a $1$-day (Figure \ref{1-day forecast}) and a $7$-day forecast (Figure \ref{7-day forecast}).

\begin{figure}[H]\label{1-day forecast}
   \begin{minipage}{0.5\textwidth}
        \centering
        \begin{subfigure}{\linewidth}
            \includegraphics[width=\linewidth]{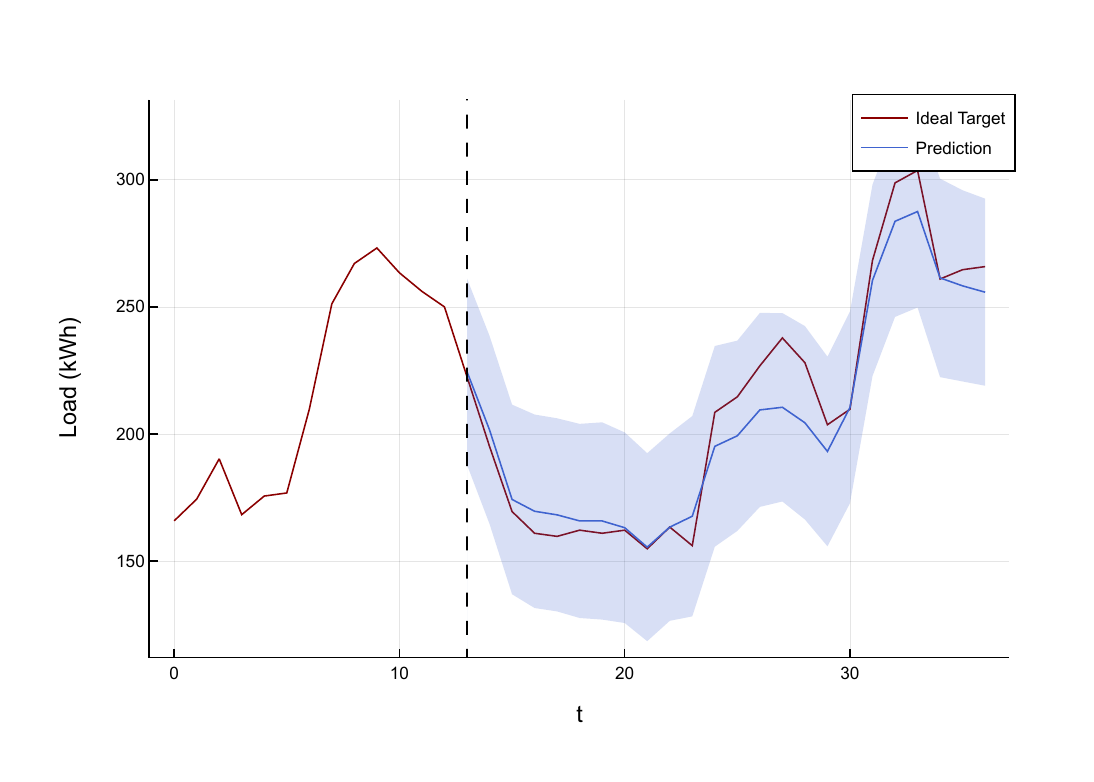}
            \caption{\small user 5}
        \end{subfigure}
        \par\bigskip\bigskip
        \begin{subfigure}{\linewidth}
            \includegraphics[width=\linewidth]{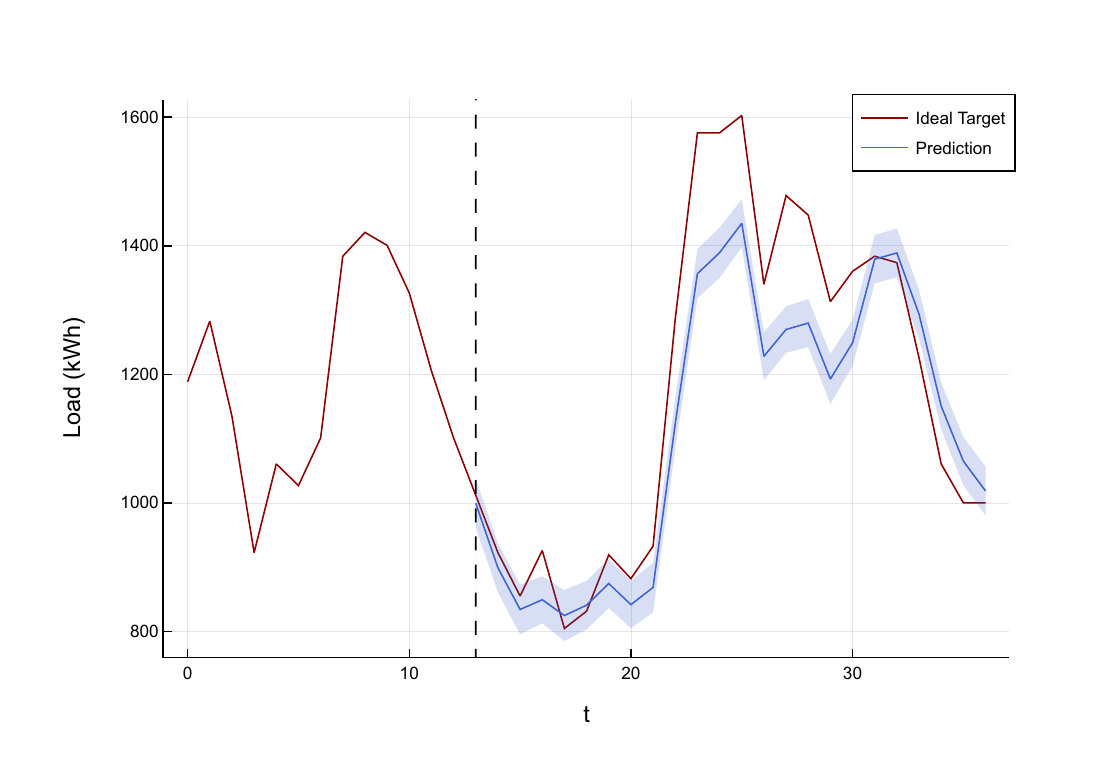}
            \caption{\small user 8}
        \end{subfigure}
    \end{minipage}
    \hfill
    \begin{minipage}{0.5\textwidth}
        \begin{subfigure}{\linewidth}
            \includegraphics[width=\linewidth]{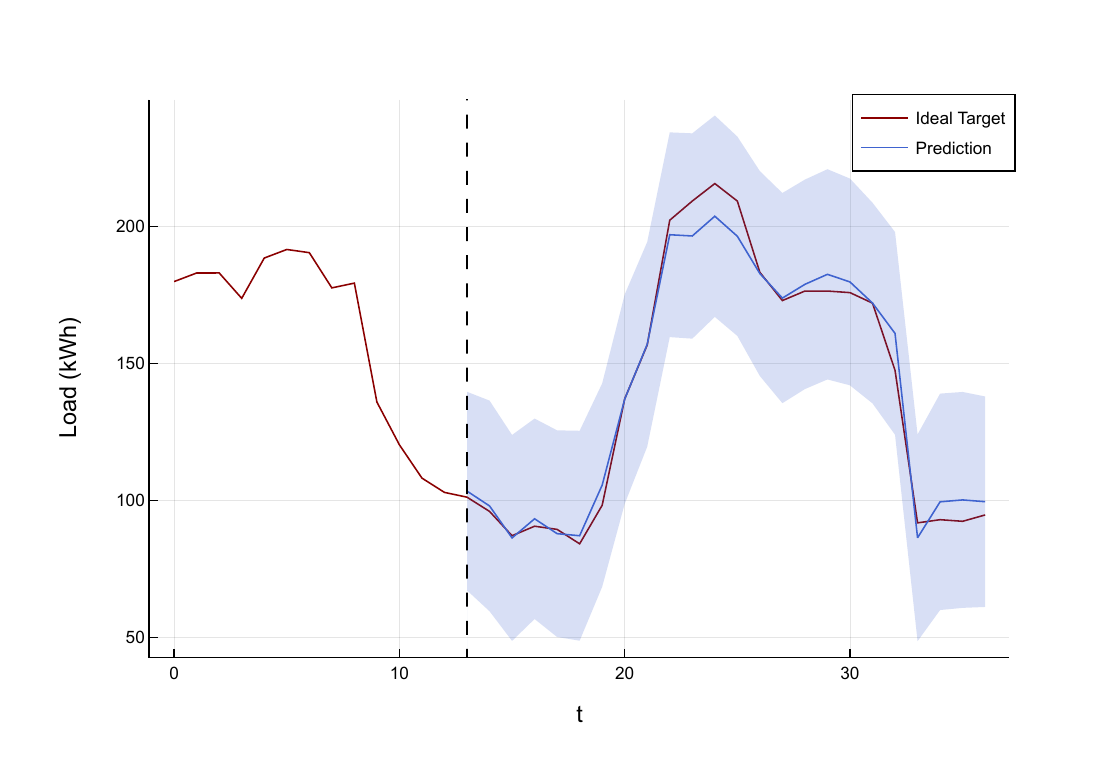}
            \caption{\small user 169}
        \end{subfigure}
        \par\bigskip\bigskip
        \begin{subfigure}{\linewidth}
            \includegraphics[width=\linewidth]{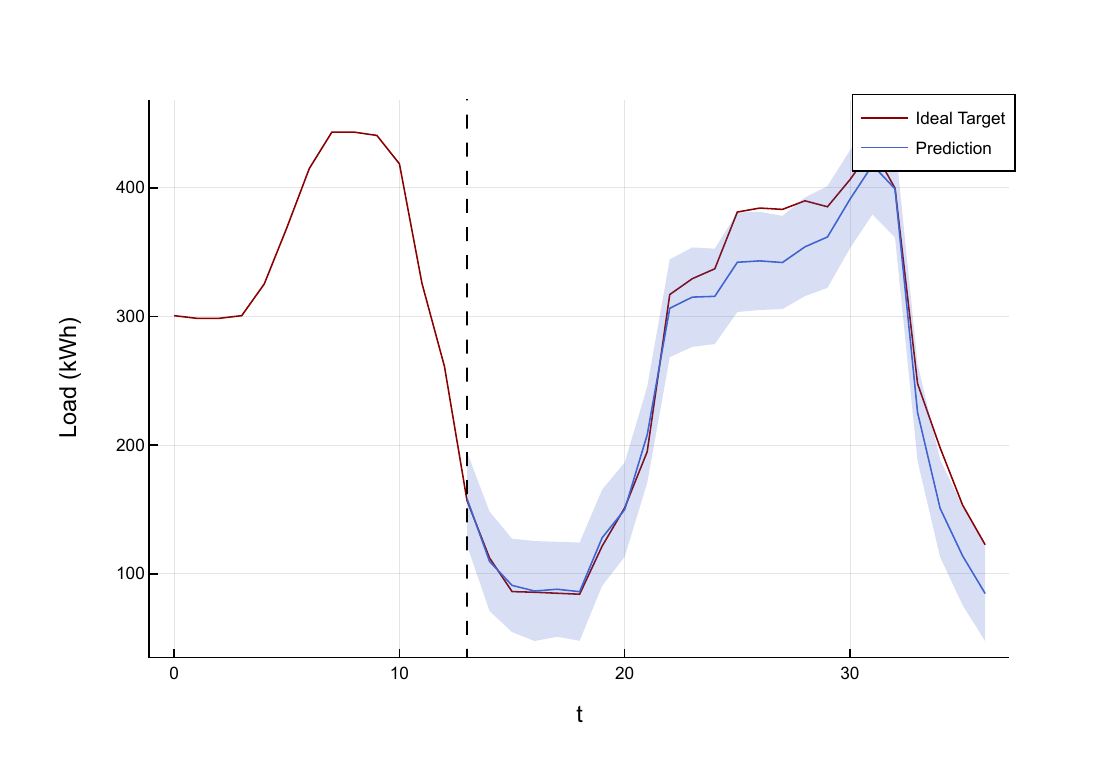}
            \caption{\small user 335}
        \end{subfigure}
    \end{minipage}
    \caption{1-day forecast for `electricity-c' consumption}\label{1-day forecast}
\end{figure}
\hfill

\begin{figure}[H] \label{7-day forecast}
   \begin{minipage}{0.5\textwidth}
        \centering
        \begin{subfigure}{\linewidth}
            \includegraphics[width=\linewidth]{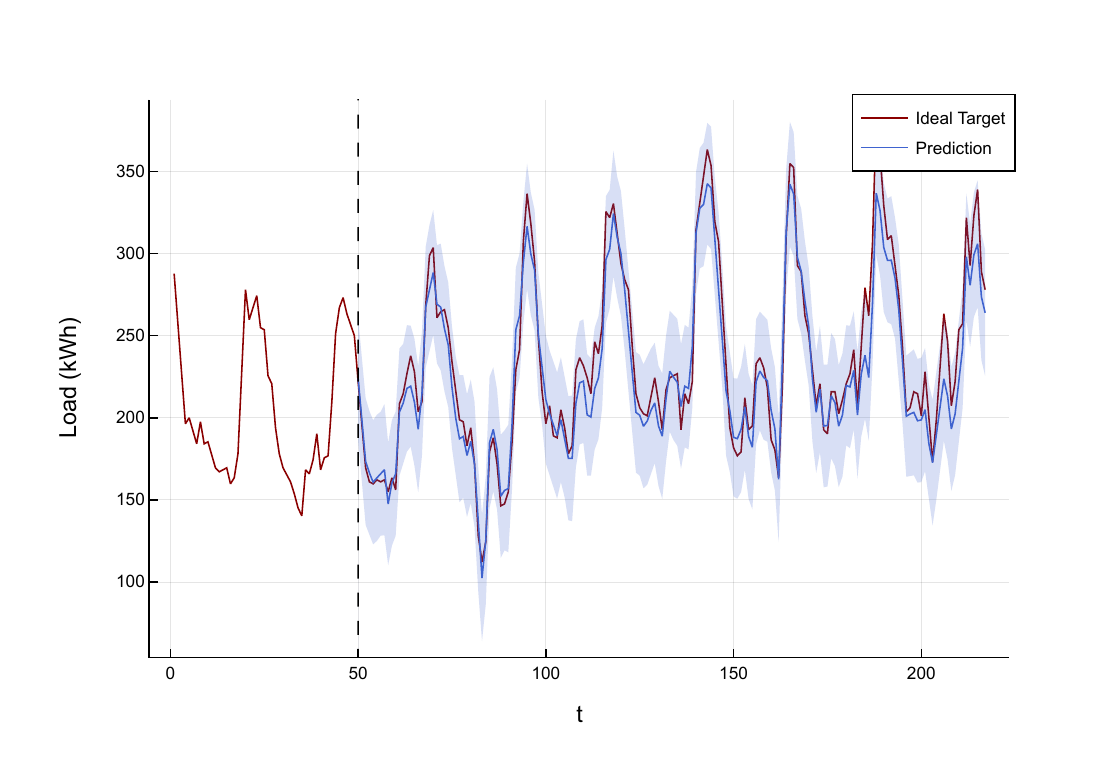}
            \caption{\small user 5}
        \end{subfigure}
        \par\bigskip\bigskip
        \begin{subfigure}{\linewidth}
            \includegraphics[width=\linewidth]{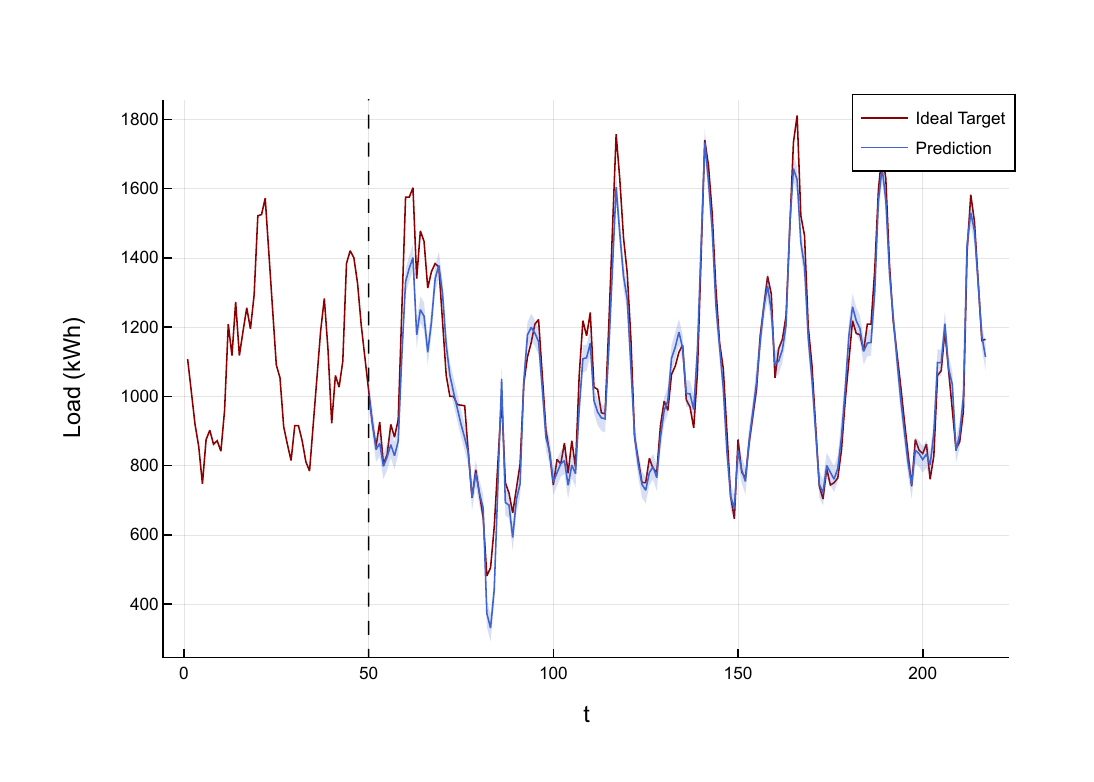}
            \caption{\small user 8}
        \end{subfigure}
    \end{minipage}
    \hfill
    \begin{minipage}{0.5\textwidth}
        \begin{subfigure}{\linewidth}
            \includegraphics[width=\linewidth]{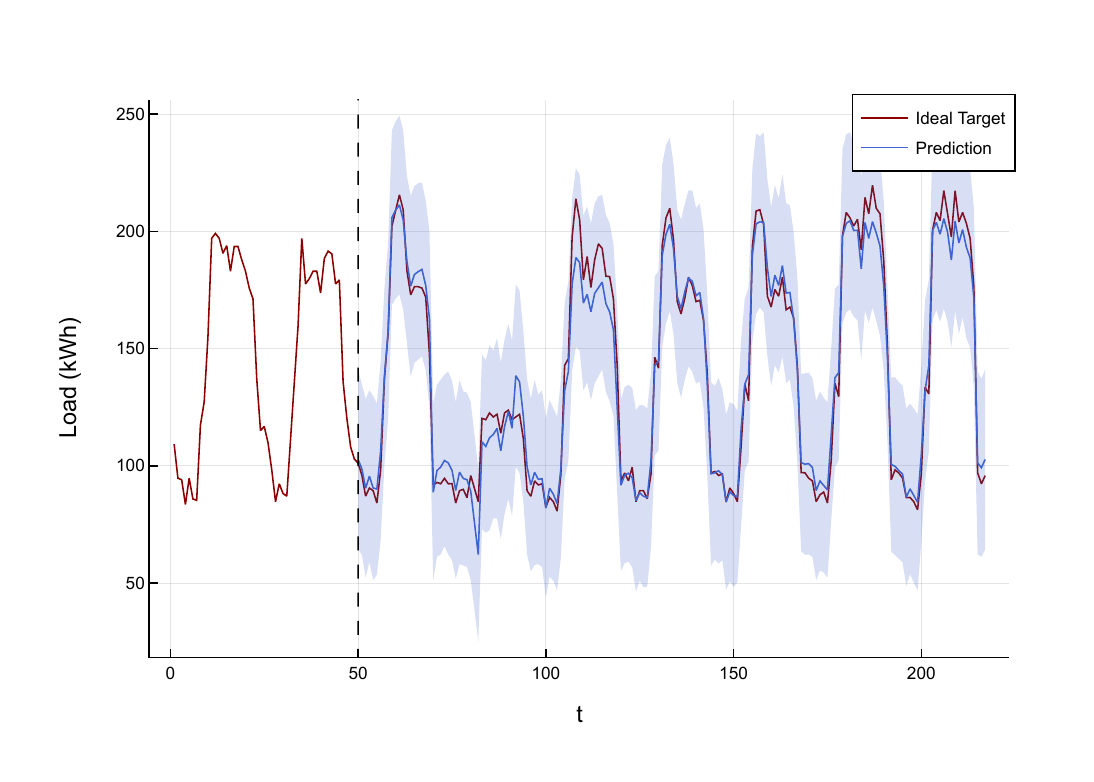}
            \caption{\small user 169}
        \end{subfigure}
        \par\bigskip\bigskip
        \begin{subfigure}{\linewidth}
            \includegraphics[width=\linewidth]{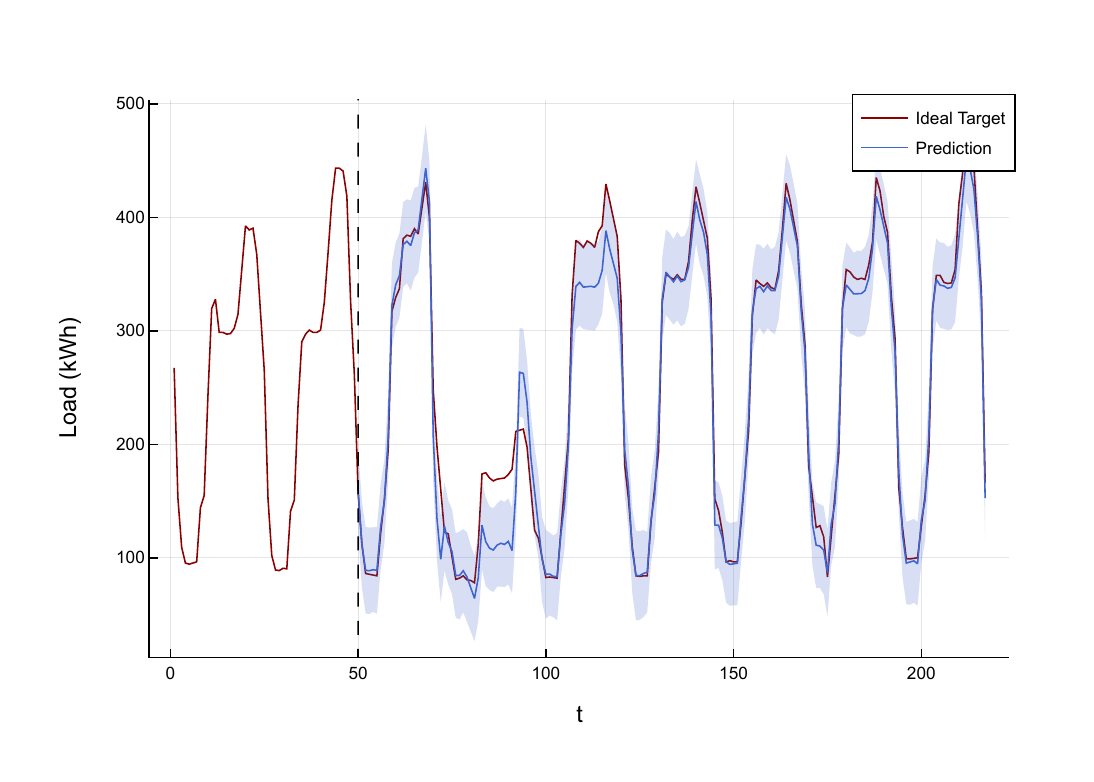}
            \caption{\small user 335}
        \end{subfigure}
    \end{minipage}
    \caption{7-day forecast for `electricity-c' consumption}\label{7-day forecast}
\end{figure}

\clearpage
\newpage
\subsubsection{Wind time series}

The wind \cite{sohier_wind_generation} dataset contains daily estimates of the wind energy potential for 28 countries over the period from 1986 to 2015, expressed as a percentage of a power plant's maximum output. The training was conducted using data from a 3-year period (1986-1989), and validation was performed on three months of data from 1992. Below, we present the results obtained for 30-day and 180-day predictions for different time series. For this purpose, we used a 3-layer RNN with 32 units in each layer and a 2-layer MLP with 64 units in each layer, both with ReLU activation functions. The final layer employs an identity activation function.
\begin{figure}[h]
   \begin{minipage}{0.45\textwidth}
        \centering
        \begin{subfigure}{\linewidth}
            \includegraphics[width=\linewidth]{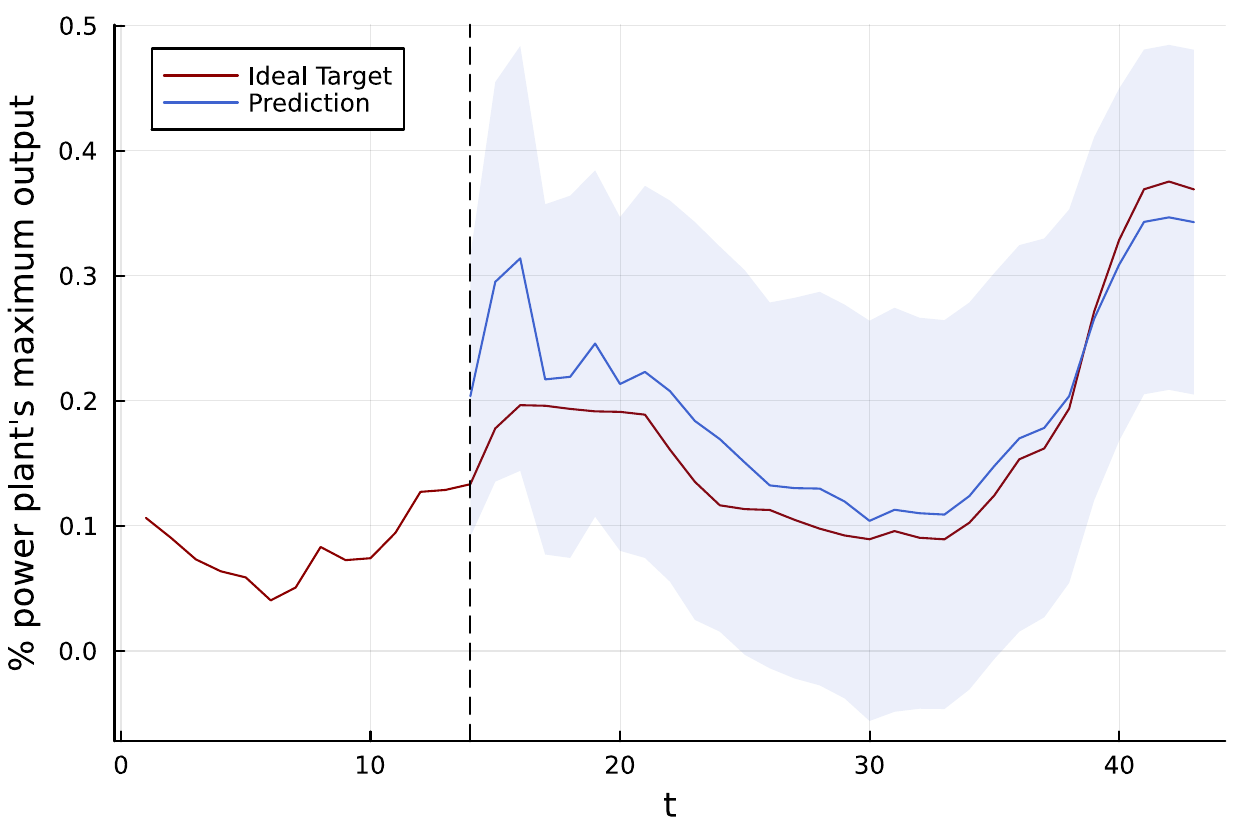}
            \caption{\small FR22}
        \end{subfigure}
        \par\bigskip\bigskip
        \begin{subfigure}{\linewidth}
            \includegraphics[width=\linewidth]{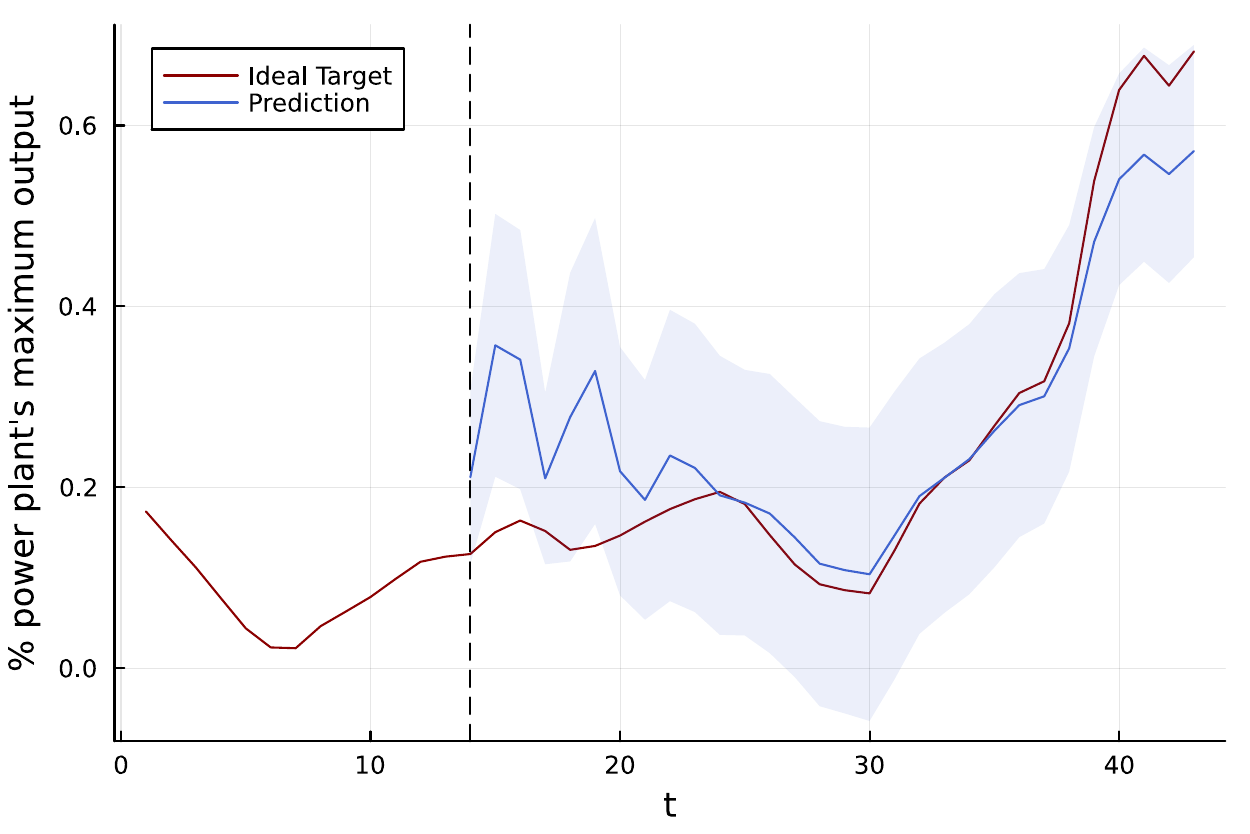}
            \caption{\small FR25}
        \end{subfigure}
    \end{minipage}
    \hfill
    \begin{minipage}{0.45\textwidth}
        \begin{subfigure}{\linewidth}
            \includegraphics[width=\linewidth]{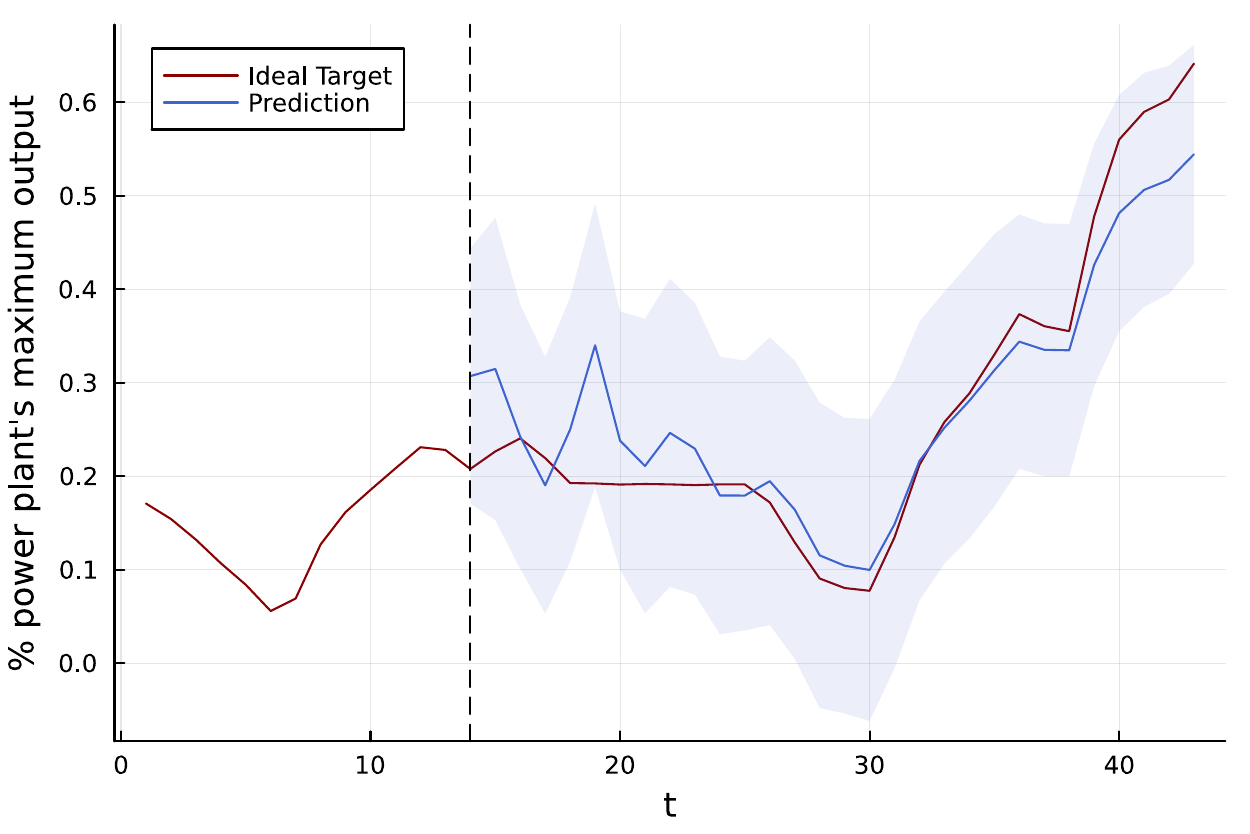}
            \caption{\small FR51}
        \end{subfigure}
        \par\bigskip\bigskip
        \begin{subfigure}{\linewidth}
            \includegraphics[width=\linewidth]{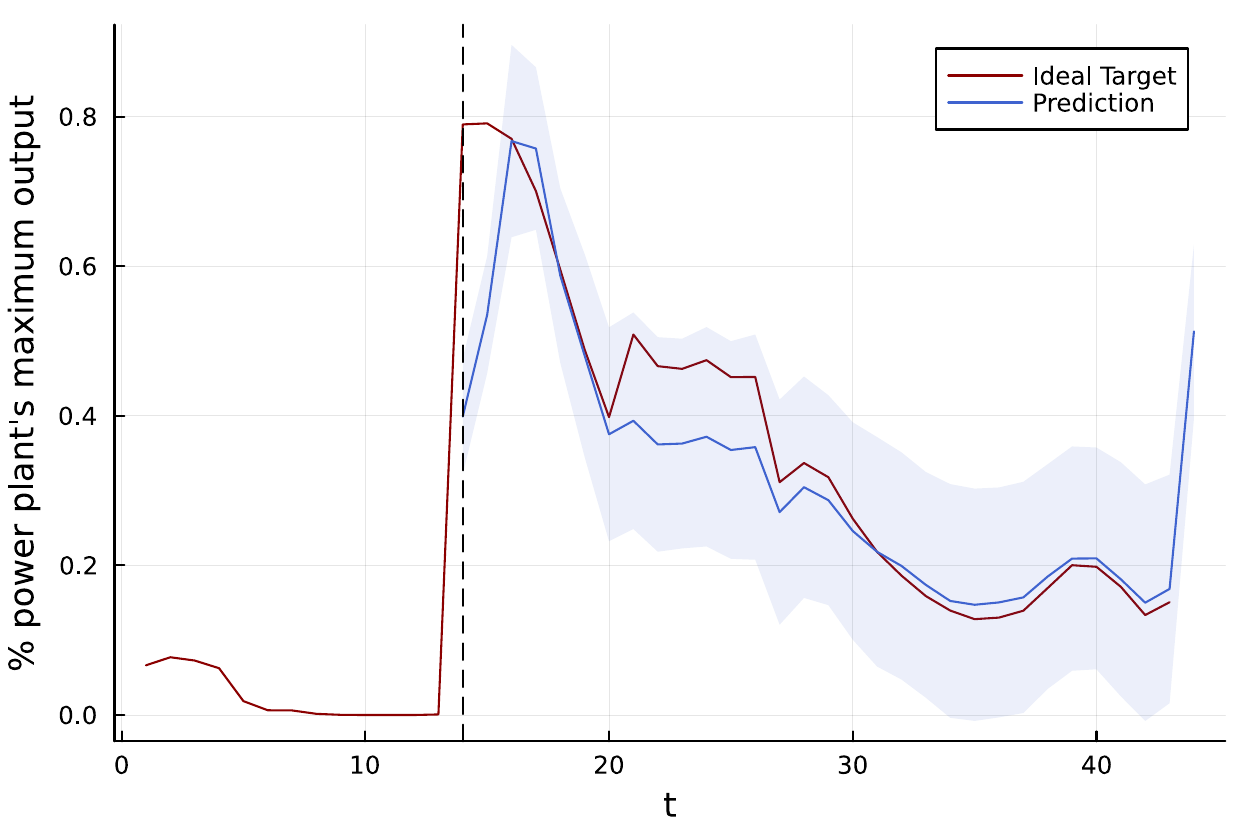}
            \caption{\small AT31}
        \end{subfigure}
    \end{minipage}
    \caption{30 days forecast prediction}
\end{figure}

\hfill

\begin{figure}[H]
\centering
   \begin{minipage}{0.6\textwidth}
        \begin{subfigure}{\linewidth}
            \includegraphics[width=\linewidth]{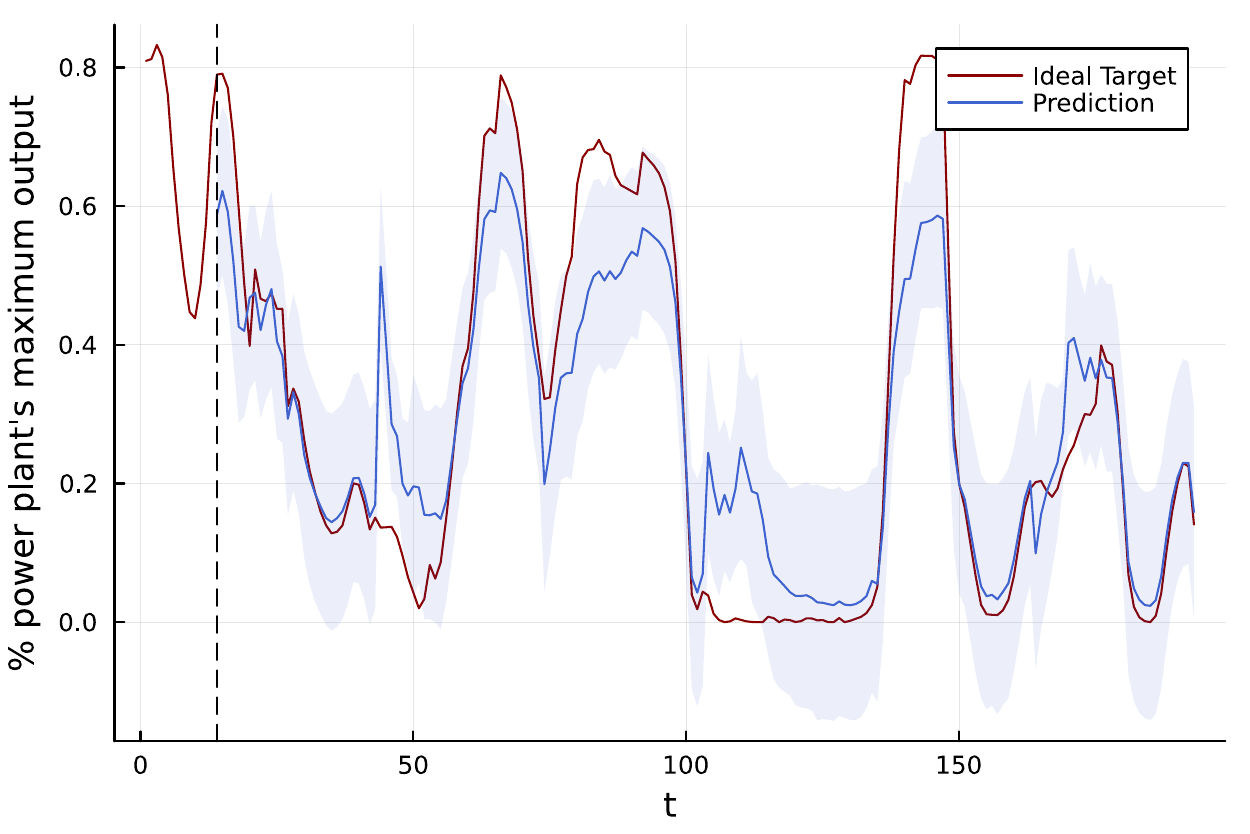}
            \caption{\small FR82}
        \end{subfigure}
    \end{minipage}
 \caption{$180$ days forecast prediction}
\end{figure}
\hfill
\begin{table}[h] \label{table Real World dataset 2}
\caption{\small Evaluation summary, using $\mathbf{QL_{\rho=0.5}/QL_{\rho=0.9}}$ metrics, on wind datasets, forecast window 30 day} \label{table Real World dataset 2}\label{Learning1D table}
\begin{center}
\setlength{\tabcolsep}{3pt}
\begin{tabular}{|c||c|}
\hline
\textbf{Method}  &\textbf{Wind} 
\\
 \toprule
 Prophet&$0.305/0.275$\\
 TRMF&$0.311/-$\\
 N-BEATS&$0.302/0.283$\\
 DeepAR&$0.286/0.116$\\
 ConvTras&$0.287/\bold{0.111}$\\
 ISL&$\bold{0.204}/0.245$\\
 \bottomrule
\end{tabular}\label{table Real World dataset 2}
\end{center}\label{table Real World dataset 2}
\end{table}

\section{Experimental Setup}\label{Experimental Setup}
All experiments were conducted using a personal computer with the following specifications: a MacBook Pro with macOS operating system version 13.2.1, an Apple M1 Pro CPU, and 16GB of RAM. The specific hyperparameters for each experiment are detailed in the respective sections where they are discussed. The repository with the code can be found in at the following link \url{https://github.com/josemanuel22/ISL}.

\hfill
\clearpage
\newpage

\bibliographystyle{apalike}
\bibliography{ref1}

\end{document}